\def\eqref#1{equation~\ref{#1}}
\def\1{\bm{1}}
\DeclareMathAlphabet{\mathsfit}{\encodingdefault}{\sfdefault}{m}{sl}
\SetMathAlphabet{\mathsfit}{bold}{\encodingdefault}{\sfdefault}{bx}{n}
\DeclareMathOperator*{\argmax}{arg\,max}
\DeclareMathOperator*{\argmin}{arg\,min}
\newtheorem{theorem}{Theorem}
\newtheorem{proposition}[theorem]{Proposition}
\newcommand{\RR}{\mathbb{R}}
\newcommand{\eg}{{e.g.}}
\title{
{PICASO: Permutation-Invariant Context Composition with State Space Models}}
\author{Tian Yu Liu\thanks{Work conducted during an internship at Amazon} \\ UCLA
\And
Alessandro Achille \\
AWS AI Labs 
\And
Matthew Trager \\
AWS AI Labs
\AND
Aditya Golatkar \\
AWS AI Labs
\And
Luca Zancato \\
AWS AI Labs
\And
Stefano Soatto \\
AWS AI Labs
}
\begin{document}

\maketitle

\begin{abstract}

Providing Large Language Models with relevant contextual knowledge at inference time has been shown to greatly improve the quality of their generations. This is often achieved by prepending informative passages of text, or `contexts', retrieved from external knowledge bases to their input. However, processing additional contexts online incurs significant computation costs that scale with their length. State Space Models (SSMs) offer a promising solution by allowing a database of contexts to be mapped onto fixed-dimensional states from which to start the generation. A key challenge arises when attempting to leverage information present across multiple contexts, since there is no straightforward way to condition generation on multiple independent states in existing SSMs. To address this, we leverage a simple mathematical relation derived from SSM dynamics to compose multiple states into one that efficiently approximates the effect of concatenating raw context tokens. Since the temporal ordering of contexts can often be uninformative, we enforce permutation-invariance by efficiently averaging states obtained via our composition algorithm across all possible context orderings. We evaluate our resulting method on WikiText and MSMARCO in both zero-shot and fine-tuned settings, and show that we can match the strongest performing baseline while enjoying on average $5.4\times$ speedup.

\end{abstract}

\section{Introduction}
Incorporating new information in deep learning models has traditionally been a costly process, often requiring re-training or fine-tuning their weights on new data. Fortunately, Large Language Models (LLMs) provide a compelling alternative: These models can `learn' to leverage new contextual information in real-time by simply prepending them as inputs, without having to modify their weights \citep{ram2023context}. This has motivated a powerful application known as Retrieval-Augmented Generation (RAG), where LLMs are deployed with the ability to retrieve and incorporate relevant sources of information, or `contexts', from vast external knowledge bases when queried by users at inference time.

Despite being faster than the naive fine-tuning of model weights, this approach still incurs significant computational costs. Not only must the system process the user query and generate an answer, but it must also process the retrieved context, which in real-world settings can amount to thousands of tokens. This problem is exacerbated in Transformer-based models, as the inference cost of generating output tokens scales quadratically with the length of the extended input (see \Cref{fig:timings}).

In contrast, State Space Models (SSMs) offer a more efficient alternative. SSMs encode information from arbitrary-length input sequences into a fixed-size state vector, which can then be conditioned on to generate new tokens without revisiting the original input. This suggests a simple method to reduce the cost of incorporating new contexts at inference-time: Instead of retrieving from a database containing raw context tokens, we can create a ``database of states” containing pre-computed state representations of contexts. At inference time, generation starts directly from the retrieved state, simultaneously eliminating the latency from having to process context tokens online, and greatly reducing inference time compared to Transformer models (\Cref{fig:timings}).

However, a key challenge arises when conditioning on multiple retrieved states. While input tokens can be simply concatenated and fed into an LLM or SSM, existing models are trained to generate outputs conditioned only on a single SSM state. To address this, we derive a simple mathematical relation via SSM dynamics to compose multiple states into one, in a manner that exactly equates to the result of concatenation in a single-layer model. Consequently, by simply storing an additional weight matrix along with each context, pre-computed states can be effectively composed at inference time to condition generation on any arbitrary set of contexts.

Since states are computed causally, the order in which contexts are presented affects the state -- when there is no natural ordering among retrieved contexts, different order of presentation would yield different states. Consequently, we propose to enforce order-invariance explicitly through averaging states obtained by composing contexts across all possible permutations. While this may appear costly at first glance, we show that the resulting state can be computed exactly in polynomial time in the number of context segments using Dynamic Programming, and this can be further reduced to linear time by accepting slight approximations. 
This greatly benefits Retrieval-Augmented Generation tasks, where our results show a $10\%$ improvement over the best order-dependent state composition method when order-invariance is incorporated into the conditioned state.

To outline our main contributions, we introduce a method for efficiently retrieving and composing multiple pre-computed states at inference time to condition the generation of high-quality outputs, which we term PICASO (\textbf{P}ermutation-\textbf{I}nvariant \textbf{C}ompositional \textbf{A}ggregation of \textbf{S}tates as \textbf{O}bservations). Our experiments show that PICASO achieves 91\% of the performance gain from combining the raw tokens of multiple contexts, while offering a $5.4\times$ speed-up over concatenation. 

PICASO can be applied to any off-the-shelf SSM model without any changes. To further improve performance, we introduce a method for fine-tuning the model to better leverage the composed states for generation. Using a pre-trained Mamba-2 2.7B model, less than a day of fine-tuning on a single A100 GPU leads to the same performance as concatenation while maintaining the $5.4\times$ faster composition time on the WikiText-V2 dataset.

\begin{figure}[t]
    \centering    
    \includegraphics[width=0.45\textwidth]{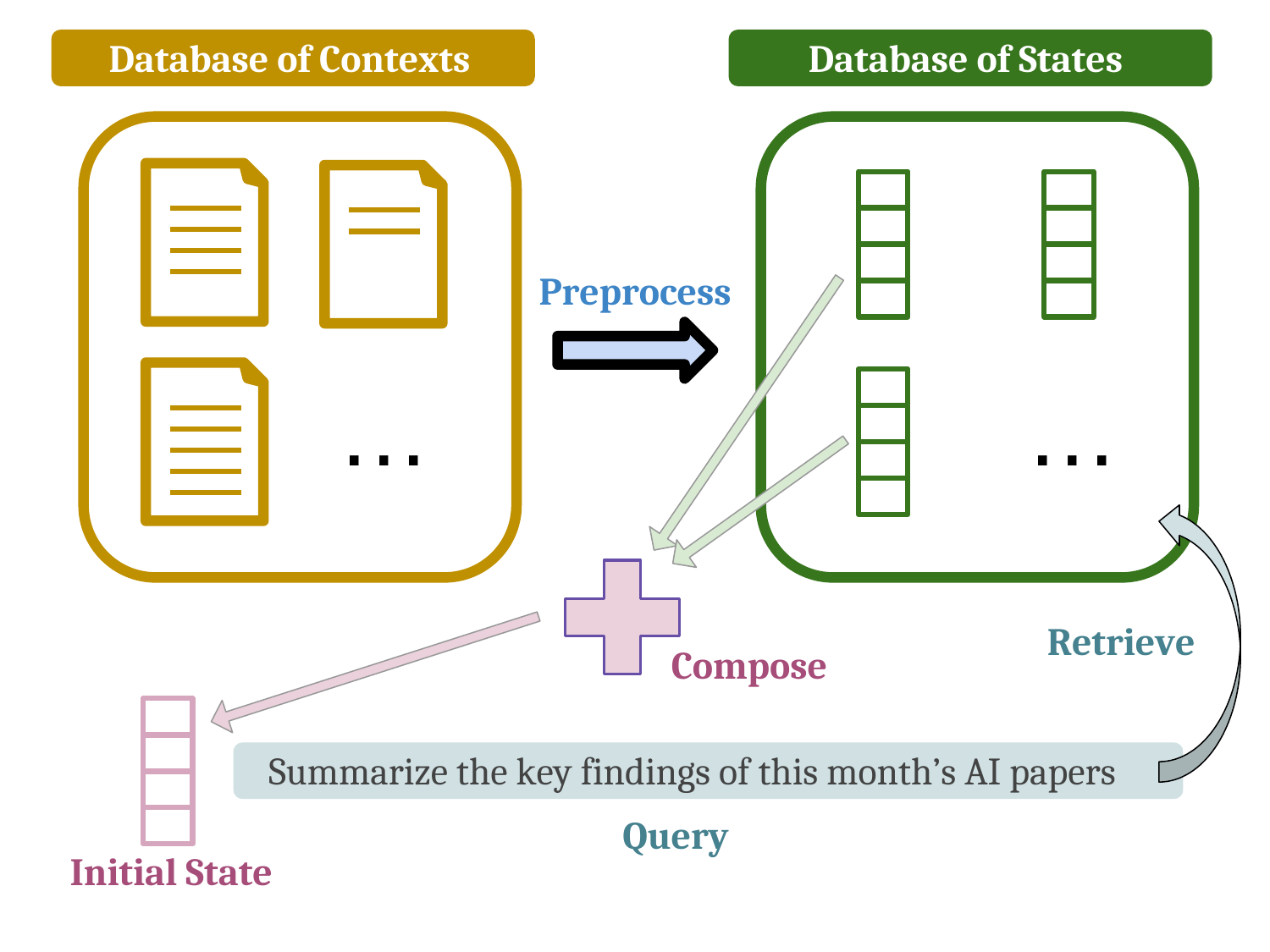}
    \hfill
    \includegraphics[width=0.45\textwidth]{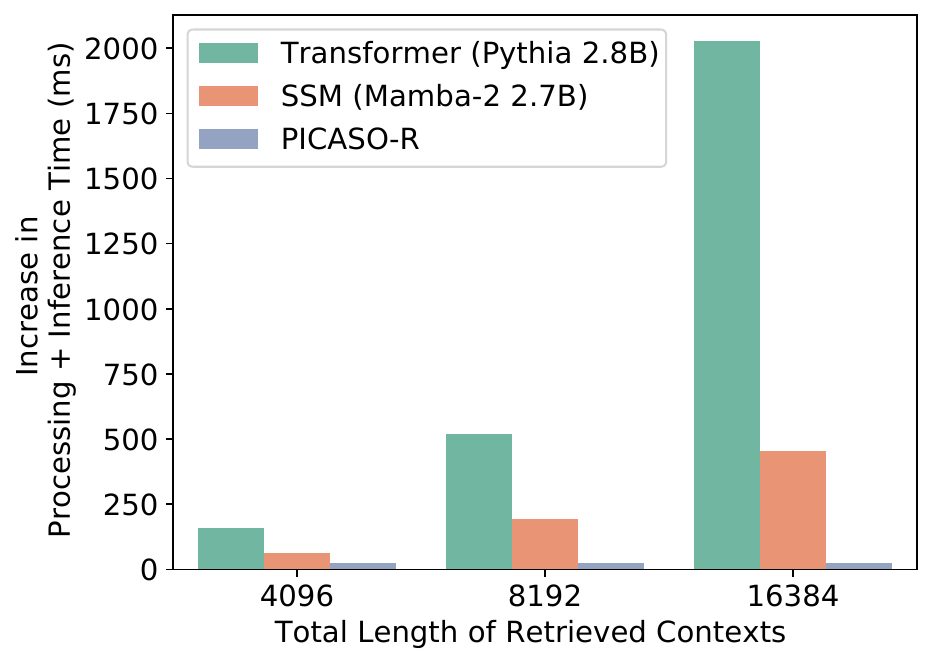}
    \caption{\textbf{(Left:)} We propose a “Database of States,” where contexts are stored as pre-processed state vectors. Given a query, relevant states are then retrieved and composed into a single state vector which is used to condition the model’s generation. \textbf{(Right:)} We plot the increase in total time required to generate an additional 64 tokens, when concatenating a 64-token prompt with retrieved contexts. We model the time taken for PICASO-R as the time taken to combine 5 pre-processed context states, which involves only arithmetic operations and notably zero model processing time. As a result, the processing and inference costs for PICASO-R remain constant regardless of the length of retrieved contexts. In contrast, the timings for a Transformer model scale quadratically, and for an SSM linearly, with total length when generating from concatenated context tokens. These timings are measured using the official Mamba benchmarking code, which includes optimizations such as quantization and CUDA graphs for SSMs, and flash attention for Transformers. }
    \label{fig:timings}
\end{figure}

\section{Related Work}
\paragraph{State Space Models and Hybrid Models.}

Recent efforts to overcome the significant computational costs of Transformer models on long contexts have inspired the exploration of more efficient alternatives, including State Space Models (SSMs). Through maintaining a fixed-size ``state'', a sufficient statistic of the past for the purpose of future prediction, these models offer advantages compared to Transformer models. They only require constant memory consumption regardless of the sequence length, and linear computational complexity, rather than quadratic, as longer sequences are processed.  
The idea of leveraging recurrent models with fixed dimensional states to represent complex sequences is not new, in fact, several variations of SSMs have been developed in the past, ranging from Linear Time Invariant (LTI) systems, to more expressive non-linear Time Varying \citep{jazwinski2007stochastic} and Input Varying \citep{krener1975bilinear} systems.

More recently, many of these ideas have been rediscovered and implemented on modern parallel hardware as basic building blocks for Foundation Models. 
\cite{gu2023mamba} proposed Mamba, an input-dependent linear SSM (termed `selective') based on LIV systems, that achieves comparable performance to Transformers \citep{vaswani2017attention} on language modeling while enjoying faster inference. 
Mamba-2 \citep{dao2024transformers} further improved computational time by implementing SSM layers with structured matrix multiplications to better leverage modern Tensor Cores.
Although pure SSM models can compete with Transformer blocks on many NLP tasks, they lag behind on tasks that require strong recall capabilities. To balance inference efficiency and model capabilities, Hybrid models combining Attention and SSM blocks have been proposed.
\cite{lieber2024jamba} combined SSM blocks along with global-attention blocks to create a hybrid architecture with Mixture-of-Expert layers for training larger models. To further improve long context ability and efficiency, \cite{ren2024samba} leveraged sliding window attention, while \cite{zancato2024b} developed a general family of architecture that include Transformers, SSMs and their hybrid combinations, leveraging both verbatim and fading memory, in both long- and short-term.
\color{black}

\paragraph{Retrieval-Augmented Generation and In-Context Learning.}
Our work falls within the scope of In-Context Retrieval-Augmented Language Models \citep{ram2023context}, where language models are conditioned on retrieved contexts via concatenation. Retrieval Augmented Generation (RAG) allows language models to leverage knowledge stored in external databases, which greatly improves performance on knowledge-intensive and domain-specific tasks \citep{lewis2020retrieval}. In our work, we simply use a pre-trained sentence embedding model for retrieval, and we refer to \cite{gao2023retrieval} for a detailed survey on other mechanisms.
Apart from retrieval, processing (multiple) retrieved contexts can also greatly increase generation latency. \cite{izacard2023atlas} mitigates this by independently processes each retrieved context with a LLM encoder, using cross attention over the concatenated encoder outputs. \cite{zhu2024accelerating} similarly encodes retrieved contexts in parallel, and performs decoding in a selective manner by attending only to highly relevant encoder outputs.

In-Context Learning (ICL) \citep{brown2020language} has emerged as an effective method to perform inference without learning (\textit{i.e.}, transduction), by conditioning on labeled samples provided in-context, commonly implemented as a set of (query, answer) pairs \citep{dong2022survey}. 
Similar to RAG, the quality of selected demonstrations have been shown to greatly affect downstream performance \citep{xu2024context}. Several methods have been developed for selecting effective demonstrations, based on sentence embeddings \citep{liu2021makes}, mutual information \citep{sorensen2022information}, perplexity \citep{gonen2022demystifying}, and even BM25 \citep{robertson2009probabilistic}.
Similar to the motivation of our work, several studies have shown that the performance of ICL is heavily dependent on demonstration ordering. \cite{zhao2021calibrate} shows that answers positioned towards the end of the prompt are more likely to be predicted, while \cite{lu2021fantastically} shows that results can vary wildly between random guess and state-of-the-art depending on the order that demonstrations are presented. Outside of ICL, \cite{liu2024lost} further shows that language models do not robustly utilize information in long input contexts due to sensitivity to positioning.

\paragraph{Model and State Composition.}
Our work falls into the category of 
composing of deep models, representations, and states. \cite{wortsman2022model} proposes Model Soup, which composes multiple non-linearly fine-tuned models via averaging model weights. \cite{liu2023tangent1,liu2023tangent2} leverages model linearization to enforce an equivalence between weight averaging and output ensembling. \cite{perera2023prompt} independently learns task-specific prompts which can be linearly averaged to yield new prompts for composite tasks.
For SSMs, \cite{pioro2024state} investigates averaging of states, along with decay-weighted mixing which is closely related to a baseline version of our method, CASO. However, the equations described in their work differ from CASO, and their evaluations are limited to composition of two equal-length contexts. In contrast, our method greatly improves upon CASO by incorporating permutation invariance, which we show is important to achieve performances comparable to that of concatenation.

\section{Method}
\subsection{Preliminaries: %
}
A linear input-dependent discrete-time state-space model has the form %
\begin{equation}
\begin{cases}
x_{t} = A(u_{t}) x_{t-1} + B(u_{t}) u_{t} \\
y_{t} = C(u_{t}) x_{t} + D u_{t}.
\end{cases}
\label{eq:SSM}
\end{equation}
Here $x_t \in \RR^m$ is the \emph{state} at time $t$, while $u_t,y_t \in \RR^d$ are the \emph{input} and the \emph{output} respectively. The matrices $A(u_t) \in \RR^{m \times m}, B(u_t) \in \RR^{m \times d}, C(u_t) \in \RR^{d \times m}$ (which are input-dependent) and $D \in \RR^{d \times d}$ are learnable parameters.

Unrolling the first equation, we obtain
\begin{equation}\label{eq:ssm-unroll}
\begin{aligned}
x_t &= A(u_t) \cdots A(u_1) x_0 + \sum_{\tau=0}^{t-1} A(u_{t}) \cdots  A(u_{t-\tau+1}) B(u_{t-\tau}) u_{t-\tau} \\
&= A(\bm{u}) x_0 + x(\bm{u}),
\end{aligned}
\end{equation}
where $\bm{u} = (u_1,\ldots,u_t)$ denotes the \emph{sequence} of inputs, $A(\bm{u}) = A(u_t) \cdots A(u_1)$ is the accumulated {decay matrix} and $x(\bm{u}) = \sum_{\tau=0}^{t-1} A(u_{t}) \cdots  A(u_{t-\tau+1}) B(u_{t-\tau}) u_{t-\tau}$ is the accumulated input signal. Since this coincides with $x_t$ when $x_0 = 0$, we refer to it as the \emph{state} for input sequence $\bm{u}$.

In the following, we write $\mathcal V \subset \RR^d$ for a finite set of token embeddings and $\mathcal V^* = \bigcup_{n\ge 0} \mathcal V^n$ for the set of variable-length sequences of token embeddings.  We view a State Space (language) Model (SSM) as a map $f_\theta :\mathcal{V}^\ast \times \mathbb{R}^m \mapsto \mathbb{P}(\mathcal{V})$ with parameters $\theta$ which takes in as input an initial state $x \in \RR^m$ and token embedding sequence $\bm{u} \in \mathcal V^*$, and returns a distribution over $\mathcal{V}$. Modern SSMs \citep{gu2023mamba,zancato2024b} usually contain multiple stacked selective state space layers as in~\eqref{eq:SSM}. In a multi-layer setting, we write $x(\bm u)$ and $A(\bm u)$ for the sequence of states and decay matrices corresponding to all layers.

\subsection{Database of States}
By the Markov property, the state of an SSM makes the past independent of the future. In other words, $f_\theta(\bm{u} \cdot \bm{u}', 0) = f_\theta(\bm{u}, x(\bm{u}'))$ for all $\bm{u},\bm{u}' \in \mathcal{V}^*$, where $\cdot$ denotes concatenation. In practice, this means that a SSM model can equivalently be initialized with the state arising from a (variable-length) input sequence, instead of the input sequence itself. This is akin to the KV-cache of Transformer architectures, except that the dimension of the state is fixed regardless of sequence length.

In several real-world use cases such as Retrieval Augmented Generation, relevant contexts are commonly obtained or retrieved from a database \citep{borgeaud2022improving}. Instead of storing them in the database as raw text or tokens, we propose to use a ``database of states," where we pre-process each context and store their states. When conditioning on a single context, we can initialize the SSM with the retrieved pre-processed state instead of having to process it online. However this poses a problem when attempting to compose multiple contexts, since we do not know how to compose their states. We will show how this is tackled with our proposed method.
 
\subsection{Permutation-Invariant Composition with State Space Models} %

Given a query and a collection of relevant contexts, an easy method to compose them is to simply concatenate all context tokens with the query into a single sequence to feed into the SSM. Recall that this, however, presents two key limitations. Before even a single token continuation can be generated from the query, the entire sequence of concatenated contexts has to be processed sequentially, which can be computationally intensive when contexts are long or numerous (\Cref{fig:timings}). Another limitation is having to select the order of context concatenation when prompting the model, for which there might be no natural way of doing so without a powerful scoring mechanism. 

To address the first limitation, we propose a first version of our method, Compositional Aggregation of States as Observations (CASO), which works by modeling sequence concatenation with state composition based on the dynamics of a single-layer SSM.

\begin{proposition}[CASO]
Let $\bm{u}_1,\ldots,\bm{u}_n$ be a collection of input sequences and let $\bm{u}=\bm{u}_1 \cdots \bm{u}_n$ be their concatenation. Then, for a SSM layer that evolves based on \eqref{eq:SSM},
we have
\begin{equation}\label{eq:caso-expand}
x(\bm{u}) = x(\bm{u}_n) + \sum_{i=1}^{n-1} A(\bm{u}_n) \cdots A(\bm{u}_{i+1}) \cdot x(\bm{u}_i)
\end{equation}
\label{prop:caso}
\end{proposition}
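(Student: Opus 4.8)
The plan is to prove the identity by induction on the number of segments $n$, relying on two elementary properties of the dynamics in \eqref{eq:SSM}: that the final state depends \emph{affinely} on the initial state (which is exactly the content of \eqref{eq:ssm-unroll}), and that the state map is \emph{sequential}, so that feeding a concatenation into the layer is the same as feeding its segments one after another with the running state carried across the boundaries.

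First I would isolate the two-segment case as the key lemma. Processing $\bm{v}\cdot\bm{w}$ from $x_0=0$ can be split at the boundary between $\bm{v}$ and $\bm{w}$: after the $\bm{v}$-block the running state equals $x(\bm{v})$ (since we started from zero), and the $\bm{w}$-block is then processed with $x(\bm{v})$ as its initial state. Applying \eqref{eq:ssm-unroll} to the $\bm{w}$-block with initial state $x(\bm{v})$ yields immediately the two-segment identity
\[
x(\bm{v}\cdot\bm{w}) = A(\bm{w})\, x(\bm{v}) + x(\bm{w}),
\]
together with the multiplicative rule $A(\bm{v}\cdot\bm{w}) = A(\bm{w})A(\bm{v})$ for the accumulated decay matrices (later tokens multiply on the left, consistent with the convention $A(\bm{u})=A(u_t)\cdots A(u_1)$).

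With this in hand the induction is routine. The base case $n=1$ is the empty-sum statement $x(\bm{u}_1)=x(\bm{u}_1)$. For the inductive step I would write $\bm{u} = (\bm{u}_1\cdots\bm{u}_{n-1})\cdot\bm{u}_n$, apply the two-segment lemma with $\bm{v}=\bm{u}_1\cdots\bm{u}_{n-1}$ and $\bm{w}=\bm{u}_n$ to obtain $x(\bm{u}) = A(\bm{u}_n)\,x(\bm{u}_1\cdots\bm{u}_{n-1}) + x(\bm{u}_n)$, and then substitute the induction hypothesis for $x(\bm{u}_1\cdots\bm{u}_{n-1})$. Distributing $A(\bm{u}_n)$ across the hypothesized sum and absorbing it into each product $A(\bm{u}_{n-1})\cdots A(\bm{u}_{i+1})$ reindexes every term into the desired form $A(\bm{u}_n)\cdots A(\bm{u}_{i+1})\,x(\bm{u}_i)$, with the new $i=n-1$ term supplied by $A(\bm{u}_n)\,x(\bm{u}_{n-1})$.

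I expect no deep obstacle here; the only point requiring care is bookkeeping — the left-to-right ordering of the decay-matrix products and the direction of concatenation must be kept consistent so that the telescoping in the inductive step lands exactly on $A(\bm{u}_n)\cdots A(\bm{u}_{i+1})$ rather than its reverse. A direct, non-inductive alternative would partition the defining sum for $x(\bm{u})$ in \eqref{eq:ssm-unroll} according to which segment $\bm{u}_i$ each summation index falls into, and then factor out of each block the accumulated decay over the positions lying after segment $i$; this also works but demands heavier index management, which is why the inductive route via the two-segment identity is the cleaner presentation.
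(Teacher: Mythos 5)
Your proof is correct and takes essentially the same route as the paper: the paper's (one-line) argument is precisely to apply the two-segment relation $x(\bm{u}) = A(\bm{u}_n)\, x(\bm{u}_1 \cdots \bm{u}_{n-1}) + x(\bm{u}_n)$ recursively, which is exactly your induction built on the two-segment lemma. You simply make explicit the details the paper leaves implicit (deriving the lemma from \eqref{eq:ssm-unroll} with initial state $x(\bm{v})$, the trivial base case, and the reindexing in the inductive step).
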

We can see this by recursively applying \eqref{eq:ssm-unroll} on $x(\bm{u}) = A(\bm{u}_n) x(\bm{u}_1 \cdots \bm{u}_{n-1}) + x(\bm{u}_n)$.

Given a collection of contexts $\bm{u}_1, \ldots, \bm{u}_n$, CASO simply approximates the dynamics of multi-layer SSMs, for which \Cref{prop:caso} does not hold exactly, via $x^{\rm CASO}_\theta(\bm{u}_1, \ldots, \bm{u}_n) = x(\bm{u}_n) + \sum_{i=1}^{n-1} A(\bm{u}_n) \cdots A(\bm{u}_{i+1}) \cdot x(\bm{u}_i)$. We then load $x^{\rm CASO}_\theta(\bm{u}_1, \ldots, \bm{u}_n)$ as the initial state of the model to infer continuations from the given query. We note that in Mamba-style models, the matrices $A(\cdot)$ are diagonal. As such, computing CASO requires only simple element-wise arithmetic operations and importantly zero model computation time (\textit{i.e.} zero forward passes required).

However, since each state is weighted by the decay factors of future  contexts, this composition operation is still very much order-dependent. We propose to introduce permutation-invariance by considering a group of permutations $G \subseteq S_n$, where $S_n$ denotes the symmetric group of $n$ elements, using which we define our method, PICASO (\textbf{P}ermutation-\textbf{I}nvariant CASO):
\begin{equation}\label{eq:group-average}    
x^{\rm PICASO}(\bm{u}_1,\ldots,\bm{u}_n) :=  \frac{1}{|G|} \sum_{\pi \in G} x^{\rm CASO}(\bm{u}_{\pi(1)},\ldots,\bm{u}_{\pi(n)})
\end{equation}

For any group $G$, by expansion of the CASO terms and collecting common factors, this can be written as a linear combination of individual context states $x(\bm{u}_i)$: 
\[
x^{\rm PICASO}(\bm{u}_1, \ldots, \bm{u}
_n) = \sum_{i=1}^n W_{i}(\bm{u}_1, \ldots, \bm{u}
_n) x(\bm{u}_i)
\]
with weights $W_{i}$ depending on $A(\bm{u}_1), \ldots, A(\bm{u}_n)$. In this work we are particularly concerned with two cases: the full symmetric group $G = S_n$, which includes all possible permutations, and the cyclic group $G = C_n$, which consists of rotations of the sequence. We will refer to them as PICASO-S and PICASO-R respectively.

While they appear computationally infeasible at first glance, since PICASO-S and PICASO-R average over $n!$ and $n$ CASO states respectively, each of which is itself a composition of $n$ context states, the following propositions show that they can actually be computed in polynomial and linear time respectively for modern SSM models with diagonal $A$ matrices.

\begin{proposition} Assume $G = S_n$ and that the matrices $A(\bm{u}_i)$ commute (\eg, are diagonal). Using shorthand notations $A_i := A(\bm{u}_i)$ and $W_k := W_k(\bm{u}_1, \ldots, \bm{u}_n)$ we have
\begin{align*}
W_k &= \frac{1}{n!} \Bigg[(n-1)! + (n-2)! \cdot 1! \cdot \sum_{\substack{1 \leq i_1 \leq n \\ i_1 \neq k}} A_{i_1} + (n-3)! \cdot 2! \cdot \sum_{\substack{1 \leq i_1 < i_2 \leq n \\ i_1, i_2 \neq k}} A_{i_1} A_{i_2} + \ldots \Bigg] \\
&= \frac{1}{n} \sum_{m=0}^{n-1} \frac{1}{{n-1 \choose m}} \cdot e_m(A_1, \ldots, A_{k-1}, A_{k+1}, \ldots, A_{n}),
\end{align*}
where
\begin{align*}
e_m(A_1, \cdots A_{n-1}) := \sum_{1\leq i_1 < i_2 < \cdots < i_m \leq n-1} A_{i_1} \cdots A_{i_m}
\end{align*}
is the $m$-th \emph{elementary symmetric polynomial} \citep{macdonald1998symmetric} (in the matrices $A_i$).
\end{proposition}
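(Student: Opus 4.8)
The plan is to compute the coefficient $W_k$ directly from the definition of $x^{\rm PICASO}$ as the average of CASO states over $S_n$, reducing everything to a counting argument over permutations. First I would fix a permutation $\pi \in S_n$ and read off the coefficient of $x(\bm{u}_k)$ in the single CASO state $x^{\rm CASO}(\bm{u}_{\pi(1)},\ldots,\bm{u}_{\pi(n)})$. By \Cref{prop:caso}, if context $k$ sits at position $p = \pi^{-1}(k)$, its coefficient is the accumulated decay $A(\bm{u}_{\pi(n)}) \cdots A(\bm{u}_{\pi(p+1)}) = \prod_{l>p} A_{\pi(l)}$, i.e.\ the product of the decay matrices of precisely those contexts appearing after $k$ (and the identity when $p=n$).

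The crucial observation---and where the commutativity hypothesis enters---is that, because the $A_i$ commute, this product depends only on the \emph{unordered} set $S = \{\pi(p+1),\ldots,\pi(n)\} \subseteq \{1,\ldots,n\}\setminus\{k\}$ of contexts following $k$, and equals $\prod_{j\in S} A_j$ regardless of their internal ordering. I would therefore regroup the average $\frac{1}{n!}\sum_\pi$ according to this set $S$. For a fixed $S$ with $|S|=m$, a permutation places exactly the elements of $S$ after $k$ iff $k$ occupies position $n-m$, the $m$ elements of $S$ fill the slots after it, and the remaining $n-1-m$ elements fill the slots before it; this gives exactly $m!\,(n-1-m)!$ such permutations.

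Summing $\prod_{j\in S} A_j$ over all size-$m$ subsets $S$ of $\{1,\ldots,n\}\setminus\{k\}$ produces, by definition, the elementary symmetric polynomial $e_m(A_1,\ldots,A_{k-1},A_{k+1},\ldots,A_n)$. Collecting terms by $m$ then yields
\[
W_k = \frac{1}{n!} \sum_{m=0}^{n-1} m!\,(n-1-m)! \cdot e_m(A_1,\ldots,A_{k-1},A_{k+1},\ldots,A_n),
\]
which is the first displayed line of the statement. The final step is a routine simplification: writing $n! = n\cdot(n-1)!$ and recognizing $\frac{(n-1)!}{m!\,(n-1-m)!} = \binom{n-1}{m}$ converts the coefficient $\frac{m!\,(n-1-m)!}{n!}$ into $\frac{1}{n}\binom{n-1}{m}^{-1}$, giving the second form. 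I expect no genuine obstacle here; the only point requiring real care is to justify that commutativity is exactly what licenses replacing ordered products by set-indexed products, since without it the per-permutation coefficients could not be assembled into symmetric polynomials at all.
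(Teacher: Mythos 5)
Your proposal is correct and matches the paper's (largely implicit) derivation: the paper states the result by ``expansion of the CASO terms and collecting common factors,'' and your argument---reading off the coefficient $\prod_{l>p}A_{\pi(l)}$ from the CASO expansion, using commutativity to reduce it to the unordered set of contexts following $k$, counting $m!\,(n-1-m)!$ permutations per size-$m$ subset, and simplifying $\frac{m!\,(n-1-m)!}{n!}=\frac{1}{n}\binom{n-1}{m}^{-1}$---is exactly the counting that produces the paper's first displayed line and its binomial rewriting. No gaps; your emphasis that commutativity is precisely what allows assembling ordered decay products into elementary symmetric polynomials is the right point of care.
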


Elementary symmetric polynomials satisfy the recursive relation
\[
e_m(A_1, \ldots, A_{n-1}) = A_{n-1} e_{m-1}(A_1, \ldots, A_{n-2}) + e_m (A_1, \ldots, A_{n-2}).
\]
Using this relation, we can compute all values of $e_m$, and hence the coefficients $W_{k}$, using $O(n^2)$ operations via Dynamic Programming. We detail the implementation in \Cref{alg:picaso-s} of the Appendix. Consequently, the full state from PICASO-S can be efficiently computed in polynomial $\mathcal{O}(n^3)$ time, which we show in the experiments to still be faster than processing textual context concatenations even for $n$ as large as $10$.

Next, we similarly show that the coefficients for PICASO-R can be efficiently computed by exploiting invertibility of the matrices $A(\bm{u}_i)$.

\begin{proposition} Assume $G = C_n$ (cyclic permutations). Then writing $A_i := A(\bm{u}_i)$ and $W_k := W_k(\bm{u}_1, \ldots, \bm{u}_n)$ we have
\begin{align*}
W_k &= \frac{1}{n} \left[{\rm Id} + \sum_{m=1}^{n-1} A_{[k+m]_n} \cdots A_{[k+1]_n} \right].
\end{align*}

where ${\rm Id}$ is the identity matrix, and $[i]_n$ denotes $i \operatorname{mod} n$. Assuming that the matrices $A_i$ are invertible, these can be computed efficiently by setting
\[
\bar A_i = \begin{cases} A_{[i]_n} \cdots A_1 \cdot A_n  \cdots A_1 & i > n \\ 
                   A_{i}  \cdots A_1 & i \leq n   \end{cases}, \quad \bar B_i =  \bar A_1 + \ldots + \bar A_{i-1}, \quad W_k = \frac{1}{n}[\bar A_k^{-1} (\bar B_{k + n} - \bar B_k)],
\]
for $i=1,\ldots,2n$, and $k=1,\ldots,n$.
\end{proposition}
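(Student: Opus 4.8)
The plan is to prove the result in two stages: first establish the closed form for $W_k$ by directly expanding the group average over cyclic rotations, and then verify that the stated prefix-product recursion computes exactly this closed form.

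For the first stage, I would fix a target index $k$ and track the coefficient of $x(\bm{u}_k)$ across all $n$ cyclic rotations. Recall from the CASO expansion (\Cref{prop:caso}) that in $x^{\rm CASO}(\bm{u}_{\sigma(1)}, \ldots, \bm{u}_{\sigma(n)})$, the coefficient multiplying $x(\bm{u}_{\sigma(i)})$ is the ordered product $A(\bm{u}_{\sigma(n)}) \cdots A(\bm{u}_{\sigma(i+1)})$ of the decay matrices of all contexts appearing strictly after position $i$ (the identity when $i=n$), with later contexts on the left. The key observation is that the cyclic group $C_n$ preserves the cyclic successor relation: in every rotation the context immediately following $\bm{u}_k$ in cyclic order is $\bm{u}_{[k+1]_n}$, then $\bm{u}_{[k+2]_n}$, and so on. Hence a rotation is determined by the position of $\bm{u}_k$, and as that position ranges over $1, \ldots, n$ the number $m$ of contexts appearing after $\bm{u}_k$ ranges bijectively over $0, \ldots, n-1$. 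For the rotation with exactly $m$ contexts following $\bm{u}_k$, those contexts are $\bm{u}_{[k+1]_n}, \ldots, \bm{u}_{[k+m]_n}$, so the CASO coefficient of $x(\bm{u}_k)$ is $A_{[k+m]_n} \cdots A_{[k+1]_n}$. Summing over the $n$ rotations and dividing by $|C_n| = n$ yields $W_k = \frac{1}{n}\sum_{m=0}^{n-1} A_{[k+m]_n} \cdots A_{[k+1]_n}$, which is the claimed formula (the $m=0$ term being the identity).

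For the second stage, I would interpret $\bar A_i$ as the reverse cumulative product over the doubled index sequence $A_1, \ldots, A_n, A_1, \ldots$: writing $A_{[j]}$ for $A_j$ when $j \le n$ and $A_{j-n}$ when $j > n$, one checks directly from the case definition that $\bar A_i = A_{[i]} A_{[i-1]} \cdots A_{[1]}$, so that lower indices sit on the right. The crucial algebraic identity is that for $j \ge k$ the block $\bar A_k$ appears as the right factor of $\bar A_j$, whence $\bar A_k^{-1} \bar A_j = A_{[j]} A_{[j-1]} \cdots A_{[k+1]}$; this is exactly where invertibility of the $A_i$ is used, and the argument preserves the matrix order without requiring commutativity. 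Using the prefix sums $\bar B_i = \bar A_1 + \cdots + \bar A_{i-1}$, the telescoping $\bar B_{k+n} - \bar B_k = \sum_{j=k}^{k+n-1} \bar A_j$ gives
\[
\frac{1}{n}\,\bar A_k^{-1}(\bar B_{k+n} - \bar B_k) = \frac{1}{n}\sum_{j=k}^{k+n-1} \bar A_k^{-1}\bar A_j = \frac{1}{n}\sum_{m=0}^{n-1} A_{[k+m]_n} \cdots A_{[k+1]_n},
\]
which matches the closed form from the first stage and completes the proof. (Since $k \le n$, the indices used never exceed $2n-1$, so the recursion is well defined.)

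I expect the main obstacle to be the bookkeeping in the first stage: correctly arguing that the $n$ rotations are in bijection with the values $m = 0, \ldots, n-1$ and, for each, pinning down the precise ordered product of decay matrices multiplying $x(\bm{u}_k)$, keeping the non-commutative order consistent with the CASO convention that later contexts appear on the left. Once that ordered coefficient is fixed, the second stage reduces to verifying that the doubled-sequence prefix products telescope correctly, with invertibility supplying the cancellation $\bar A_k^{-1}\bar A_j$.
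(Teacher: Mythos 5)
Your first stage is correct and is the natural (and evidently intended) derivation: the paper states this proposition without a written proof, and your bookkeeping is sound. A cyclic rotation preserves the successor structure, so the rotation is determined by the position of $\bm{u}_k$; in the rotation with exactly $m$ contexts following $\bm{u}_k$ those contexts are $\bm{u}_{[k+1]_n},\ldots,\bm{u}_{[k+m]_n}$, and by the CASO expansion the coefficient of $x(\bm{u}_k)$ is the ordered product $A_{[k+m]_n}\cdots A_{[k+1]_n}$, with $m$ ranging bijectively over $0,\ldots,n-1$ as the position varies. Averaging over the $n$ rotations gives exactly the stated $W_k$.

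The slip is in your second stage, specifically the parenthetical claim that the cancellation ``preserves the matrix order without requiring commutativity.'' As you yourself observe, $\bar A_k$ appears as the \emph{right} factor of $\bar A_j$, i.e.\ $\bar A_j = \bigl(A_{[j]}\cdots A_{[k+1]}\bigr)\bar A_k$. Peeling it off therefore requires multiplying by $\bar A_k^{-1}$ on the \emph{right}, giving $\bar A_j \bar A_k^{-1} = A_{[j]}\cdots A_{[k+1]}$, whereas the left-multiplied expression you actually use satisfies only $\bar A_k^{-1}\bar A_j = \bar A_k^{-1}\bigl(A_{[j]}\cdots A_{[k+1]}\bigr)\bar A_k$, a conjugate of the desired ordered product. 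Summing over $j$, one gets $\bar A_k^{-1}(\bar B_{k+n}-\bar B_k) = \bar A_k^{-1}\,(n W_k)\,\bar A_k$ in general, which equals $n W_k$ only when the $A_i$ commute. So either you must invoke commutativity explicitly --- which is the paper's actual setting, since the $A(\cdot)$ are diagonal in Mamba-style models and the preceding proposition (PICASO-S) makes the commuting assumption explicit --- or you should restate the efficient formula in the order-robust form $W_k = \frac{1}{n}(\bar B_{k+n}-\bar B_k)\bar A_k^{-1}$. With either fix, the telescoping $\bar B_{k+n}-\bar B_k = \sum_{j=k}^{k+n-1}\bar A_j$ and the rest of your verification go through, and your closed form agrees with the proposition (the paper's own left-inverse formula carries the same implicit reliance on diagonality, so your final expression does match the paper's).
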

We detail in \Cref{alg:picaso-r} in the Appendix our efficient implementation for computing PICASO-R in $\mathcal{O}(n)$ time complexity via cumulative sums and products. Evidently, PICASO-R is significantly faster than PICASO-S while trading off exact permutation invariance for invariance only to cyclic permutations of the original order. We will show that the difference in empirical performance between PICASO-S to PICASO-R is negligible, as such PICASO-S can almost always be replaced with its much faster variant PICASO-R.

We remark that the property of permutation-invariance can also be applied to naive concatenation (as opposed to CASO). This is achieved simply by concatenating contexts in various different orders, followed by taking an average of their resulting states. While performing this for the symmetric group $S_n$ is computationally infeasible, we can similarly restrict our permutation set to $C_n$. We term this variant Permutation-Invariant Concatenation (PIConcat-R), where $-R$ denotes invariance to the set of cyclic permutations. We note that the \emph{model} computational costs (forward passes) of this method still scales quadratically with number of contexts (compared to linear scaling of regular concatenation), as such we include it only for completeness. 

As a final technicality, we note that for Mamba-style SSM models, we additionally require storing the last $m_{conv}$ (usually $m_{conv}=4$) input tokens of each SSM layer to ensure that the state is sufficient for generating the same distributions over continuations as the input sequence. We perform simple averaging to combine these tokens from different contexts which we show to work well empirically; more sophisticated methods could be explored in future work.

\section{Why PICASO's average works}
While the combination of state expression for CASO is directly motivated by the dynamics of the system, there is no a priori reason why averaging permuted CASO states should perform well. 
In \Cref{fig:zero-shot-wikitext} we show that averaging both independent states and CASO states can perform better than using any individual state. This suggests a emergent/learned algebraic structure on the space of states such that linear combination of states combine their information to some degree.

In our empirical results below, we show that averaging all individual states (which would also be a permutation-invariant solution) performs significantly weaker than averaging CASO states (as PICASO does).
We believe that this is because the approximate linear structure of the state space is only valid locally.
The combined states are naturally closer together than the independent states, hence able to better exploit the local linearity. We show this in the following proposition:

\begin{proposition}
Consider a single-layer SSM parametrized by $\theta$, and two input sequences $\bm{u}$ and $\bm{u}'$. 
Then, the Euclidean distance between the states can be bounded via
\[
\Vert x^{CASO}(\bm{u},\bm{u}') - x^{CASO}(\bm{u}',\bm{u})\Vert_2^2 \leq \Vert (I-A(\bm{u}'))x(\bm{u})\Vert_2^2 + \Vert(I-A(\bm{u}))x(\bm{u}')\Vert_2^2
\]
\end{proposition}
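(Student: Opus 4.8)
The plan is to reduce everything to the two-sequence instance of \Cref{prop:caso} and then expand a single squared Euclidean norm. First I would specialize \eqref{eq:caso-expand} to $n=2$, which gives the two orderings in closed form:
\begin{align*}
x^{CASO}(\bm{u},\bm{u}') &= x(\bm{u}') + A(\bm{u}')\,x(\bm{u}), \\
x^{CASO}(\bm{u}',\bm{u}) &= x(\bm{u}) + A(\bm{u})\,x(\bm{u}').
\end{align*}
Subtracting and regrouping the terms by which state they multiply, the difference collapses to
\[
x^{CASO}(\bm{u},\bm{u}') - x^{CASO}(\bm{u}',\bm{u}) = (I-A(\bm{u}))\,x(\bm{u}') - (I-A(\bm{u}'))\,x(\bm{u}),
\]
so the two ``residual'' vectors appearing on the right-hand side of the claim arise naturally. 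I would write $b := (I-A(\bm{u}))\,x(\bm{u}')$ and $a := (I-A(\bm{u}'))\,x(\bm{u})$, so that the quantity to be bounded is simply $\Vert b - a\Vert_2^2$.

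Next I would expand by bilinearity of the inner product, $\Vert b - a\Vert_2^2 = \Vert a\Vert_2^2 + \Vert b\Vert_2^2 - 2\langle a, b\rangle$. The first two terms are exactly the right-hand side of the proposition, so the whole statement reduces to the single inequality $\langle a,b\rangle \ge 0$; equivalently, the claimed bound is precisely what one obtains after discarding the cross term $-2\langle a,b\rangle$.

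Hence the main obstacle is controlling this cross term, and I do not expect it to be sign-definite without further structure. I would attack it using the assumptions in force for modern SSMs: when the $A(\cdot)$ are diagonal with spectral entries in $(0,1)$ (as for Mamba-style decays), both $I-A(\bm{u})$ and $I-A(\bm{u}')$ are diagonal, positive semidefinite, and commuting, so
\[
\langle a,b\rangle = x(\bm{u})^{\top}\,(I-A(\bm{u}'))(I-A(\bm{u}))\,x(\bm{u}')
\]
is a bilinear form in the two states weighted by the diagonal PSD matrix $M=(I-A(\bm{u}'))(I-A(\bm{u}))$. The delicate point is that this form need not be non-negative for two arbitrary states, so establishing $\langle a,b\rangle \ge 0$ is exactly where the argument must do genuine work: I expect to need either a componentwise positive-alignment property of the accumulated states $x(\bm{u}),x(\bm{u}')$, or to read the stated inequality as the bound obtained after dropping this cross term, which is small precisely in the slow-decay regime $A(\cdot)\approx I$ where $M$ is small. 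That regime is also the qualitative message the proposition is meant to carry --- that the $(I-A)$ factors force the composed states close together --- so I would foreground that interpretation and flag the sign of the cross term as the one nontrivial step.
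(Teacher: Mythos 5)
Your algebra coincides exactly with the paper's: the paper likewise specializes \Cref{prop:caso} to $n=2$, writes $x^{CASO}(\bm{u},\bm{u}') = x(\bm{u}') + A(\bm{u}')x(\bm{u})$ and $x^{CASO}(\bm{u}',\bm{u}) = x(\bm{u}) + A(\bm{u})x(\bm{u}')$, regroups the difference as $(A(\bm{u}')-I)x(\bm{u}) + (I-A(\bm{u}))x(\bm{u}')$, and then concludes with the words ``apply the triangle inequality.'' Your analysis of what remains is more careful than the paper's, and your suspicion about the cross term is well founded: the triangle inequality yields only the unsquared bound
\[
\Vert x^{CASO}(\bm{u},\bm{u}') - x^{CASO}(\bm{u}',\bm{u})\Vert_2 \leq \Vert (I-A(\bm{u}'))x(\bm{u})\Vert_2 + \Vert(I-A(\bm{u}))x(\bm{u}')\Vert_2,
\]
or, after squaring and applying AM--GM, the squared bound with an extra factor of $2$ on the right-hand side. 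The squared inequality exactly as printed is equivalent, as you observe, to $\langle (I-A(\bm{u}'))x(\bm{u}),\, (I-A(\bm{u}))x(\bm{u}')\rangle \geq 0$, and this is false in general: take $A(\bm{u}) = A(\bm{u}') = \epsilon I$ with small $\epsilon > 0$ (diagonal, invertible, entries in $(0,1)$, so all the standing structural assumptions hold) and inputs realizing $x(\bm{u}') = -x(\bm{u}) \neq 0$, which is possible since single-token sequences give $x(\bm{u}) = B(u)u$ with essentially arbitrary sign pattern. Then the left side equals $4(1-\epsilon)^2\Vert x(\bm{u})\Vert_2^2$ while the right side equals $2(1-\epsilon)^2\Vert x(\bm{u})\Vert_2^2$. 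Positivity of the diagonal decay entries does not rescue the cross term, because the states themselves can be componentwise anti-aligned; no positive-alignment property of $x(\bm{u}),x(\bm{u}')$ holds without further assumptions.

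The practical conclusion is that you should not keep hunting for a proof of $\langle a,b\rangle \geq 0$ --- that step genuinely fails --- but instead repair the statement, which is what the paper's one-line proof implicitly does: either drop the squares (pure triangle inequality) or keep them and insert a factor $2$ on the right, via $\Vert a+b\Vert_2^2 \leq 2\Vert a\Vert_2^2 + 2\Vert b\Vert_2^2$. Either corrected form fully supports the use the paper makes of the proposition, namely the qualitative claim that as the decay matrices approach the identity the two CASO states coalesce, since both right-hand sides vanish as $A(\bm{u}), A(\bm{u}') \to I$. So your proposal does not contain a gap of your own making; it correctly reproduces the paper's decomposition and then correctly identifies that the final step in the printed statement cannot be discharged --- the defect lies in the squared norms of the proposition itself, and modulo one of the two repairs above your argument is the paper's proof carried out rigorously.
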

To see this, simply apply the triangle inequality on the following obtained via substituting the equations for CASO:
\begin{align*} 
\Vert x^{CASO}(\bm{u},\bm{u}') - x^{CASO}(\bm{u}',\bm{u})\Vert_2^2 &= \Vert A(\bm{u}') x(\bm{u}) + x(\bm{u}') - (A(\bm{u}) x(\bm{u}') + x(\bm{u})) \Vert_2^2 \\
&= \Vert (A(\bm{u}') - I) x(\bm{u}) + (I - A(\bm{u})) x(\bm{u}') \Vert_2^2
\end{align*}

As a special case, we observe that as the decay factor approaches the identity, the distance between two CASO states approaches zero. 
In \Cref{fig:loss-landscape-interp}, we visualize naive averaging of the states arising from 3 retrieved contexts, and averaging of CASO states resulting from each cyclic permutation of these contexts. We use WikiText-v2 as described in the Experiments for these plots. Indeed, we observe that CASO states are much closer to one another in the resulting loss landscape.

\begin{figure}[h]
    \centering
    \hfill
    \includegraphics[width=0.4\textwidth]{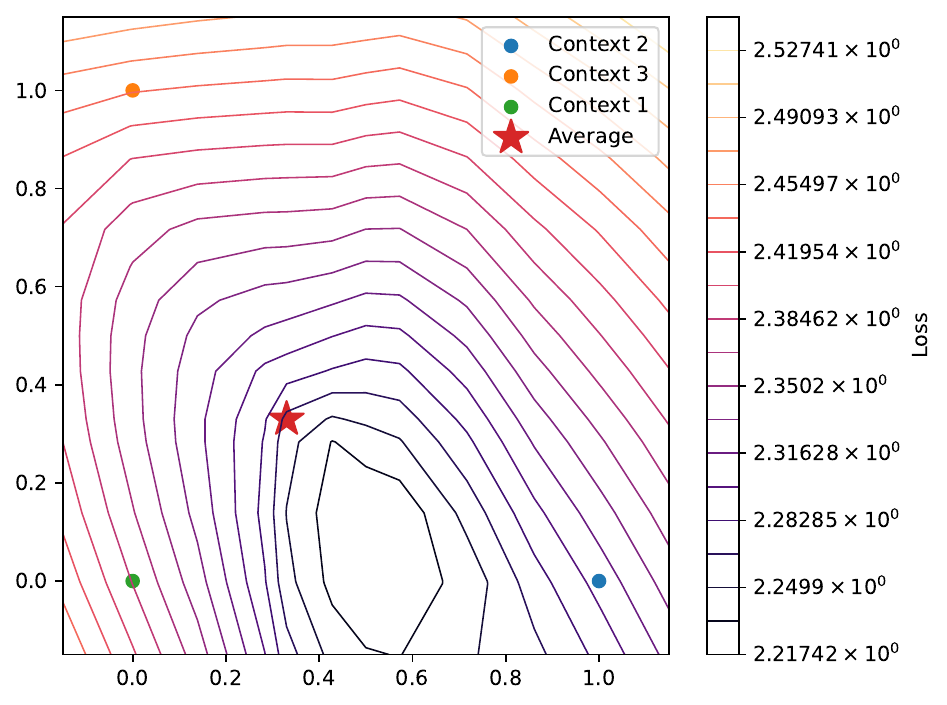}
    \hfill
    \includegraphics[width=0.4\textwidth]{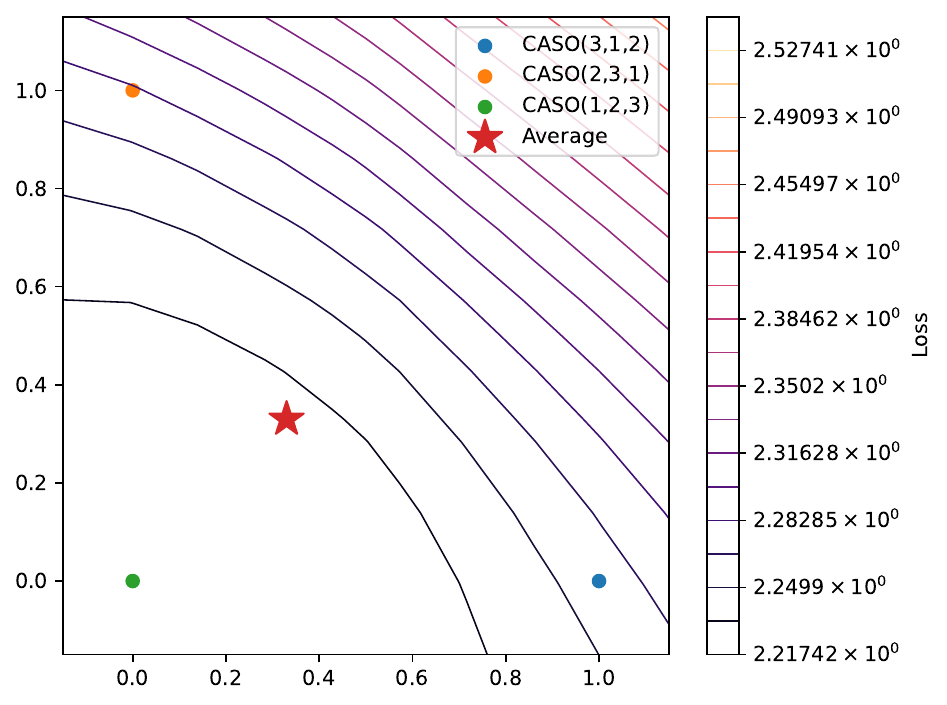}
    \hfill
    \caption{\textbf{Left:} Naive averaging (``Soup") of context states. \textbf{Right:} Averaging CASO states. CASO states are “closer” to one another (see Proposition) and hence can be more meaningfully interpolated. On the other hand, naively averaging states of independent contexts do not possess this property. Both plots are computed over 10 samples of (query, continuation, retrieved contexts).}
    \label{fig:loss-landscape-interp}
\end{figure}

\section{Learning to use composed states}
As previously noted, in practice, for SSM models consisting of multiple state space blocks stacked with temporal convolutions, $x(\bm{u})$ in~\eqref{eq:caso-expand} will not be exactly the state arising from a concatenated list of inputs. In this section, we introduce a fine-tuning objective to enable SSMs to better leverage composed states. Let $\mathcal{D} = \{(\bm{u}_i, u_i, S_i) \}_{i=1}^N$ 
be a dataset of sequences $\bm{u}_i$, their next-token continuation $u_i$, and a collection (in some particular order) of contexts $S_i$ retrieved from a database using $\bm{u}_i$. We minimize the prediction loss over the continuation, given a (composed) initial state and the query sequence:

\[
\mathcal L_{BPTC}(\theta) = \sum_{(\bm{u}_i,u_i,S_i) \in \mathcal D} L_{\rm CE}(f_\theta(\bm{u}_i,  x^{\rm PICASO}(S_i)), u_i),
\]
where $L_{\rm CE}(\cdot, \cdot)$ is the cross-entropy loss.

We denote this learning objective Backpropagation Through Composition (BPTC), where gradients are propagated through the state composition process $x^{\rm PICASO}$.
To reduce training time, we also consider an alternative version where we do not backpropagate through the composition step, which we denote Backpropagation To Composition (BP2C):
\[
\mathcal L_{BP2C}(\theta) = \sum_{(\bm{u}_i,u_i,S_i) \in \mathcal D} L_{\rm CE}(f_\theta(\bm{u}_i,  \operatorname{sg}\left[x^{\rm PICASO}(S_i)\right]), u_i),
\]
where $\operatorname{sg}$ denotes the stop-gradient operator.
We will show that when used for fine-tuning, this learning objective greatly improves the model's ability to leverage composed states for generation to the level of the concatenation albeit with much faster speeds, while maintaining performance on standard LLM evaluation tasks.

\section{Experiments}
\begin{figure}[th]
    \centering
    \includegraphics[width=0.44\textwidth]{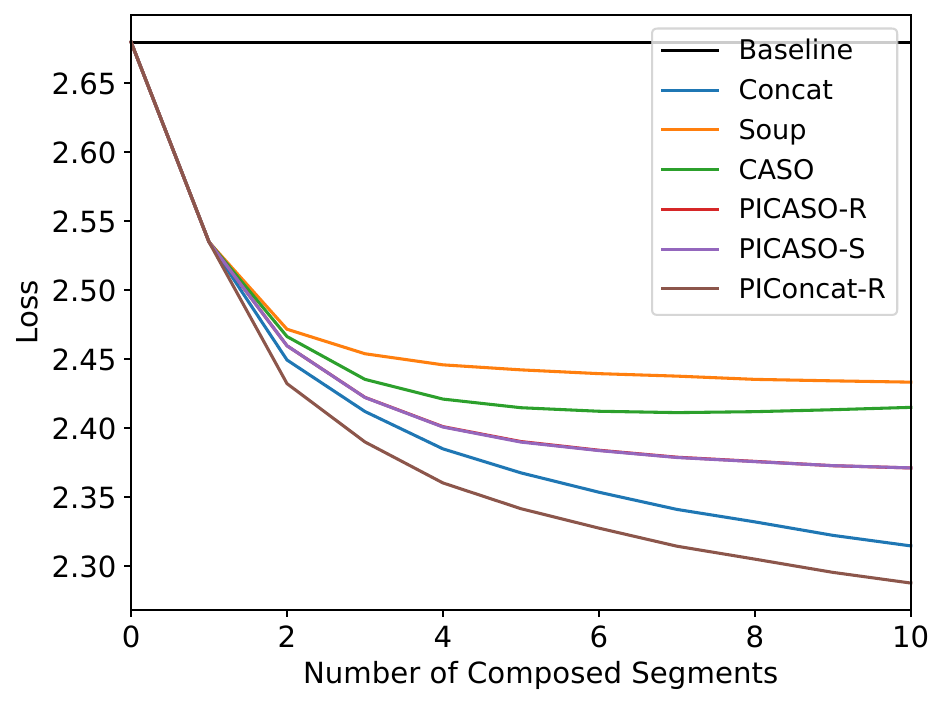}
    \hfill
    \includegraphics[width=0.44\textwidth]{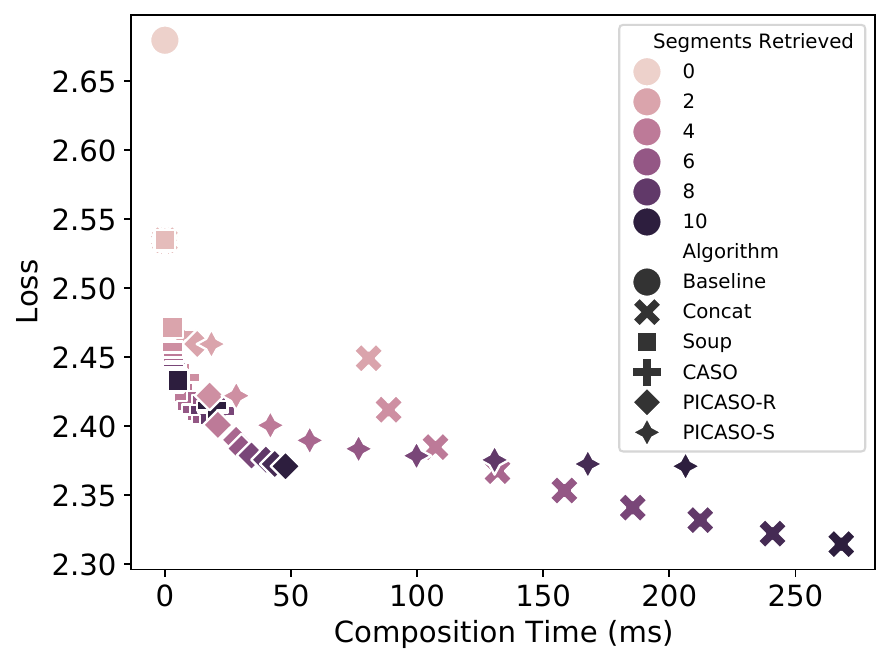}
    \caption{Zero-shot evaluation of PICASO using Mamba-2 compared to other composition methods on WikiText. While the performance of PICASO lags slightly behind that of concatenation (left), PICASO-R is on average $5.4\times$ faster (right). PICASO-S and PICASO-R perform similarly and yield overlapping curves (hence not visible in the left plot). Incorporating permutation invariance for concatenation via PIConcat-R gives the best results. However, it incurs magnitudes higher computational costs despite being performed within a single batched forward pass, hence we omit from the right plot to prevent it from disrupting the scale of the x-axis and focus comparisons on PICASO.}
    \label{fig:zero-shot-wikitext}
\end{figure}
\subsection{Implementation Details}
We run our main experiments on the largest available SSM on Huggingface - Mamba-2 2.7B \citep{dao2024transformers}. We evaluate our method on two large-scale datasets - WikiText-V2 \citep{merity2016pointer} and MSMARCO \citep{msmarco}. We use the training splits as our fine-tuning data, and the testing/validation splits respectively for evaluation. To pre-process WikiText-V2 for our use case, we split each passage in the dataset into two equal context ``segments", with the goal of predicting the second (continuation) from the first (query). The retrieval database comprises all remaining segments, from which we retrieve via an external sentence embedding model, All-MiniLM-L6-v2\footnote{https://huggingface.co/sentence-transformers/all-MiniLM-L6-v2}. In most experiments, we retrieve up to 10 segments, since improvements appears to saturate beyond that, and loss from concatenation blows up as a result of exceeding training context length (\Cref{fig:picaso-scaling-context}, Appendix). We pre-process MSMARCO by filtering only entries with well-formed answers and discarding those without relevant passages. 

We used the official benchmark\footnote{https://github.com/state-spaces/mamba} with an A100 GPU for our timing experiments in \Cref{fig:timings} to ensure fairest comparisons. For the rest of the experiments, we run the model in full-precision, and evaluate performance of the model starting from a custom initial state, a feature not supported by the official benchmark at the time of writing, as such timings differ.

For fine-tuning experiments using BPTC and BP2C, we base our implementation on the official HuggingFace \footnote{https://github.com/huggingface/transformers} trainer with default hyperparameters, and retrieve the $k$ most relevant context segments for each query sample for composition. For WikiText, we select $k \in \{0, \ldots, 10\}$ uniformly at random for each batch. For MSMARCO, we use all the available passages (both relevant and irrelevant) associated with each training example. For both datasets, we fine-tune for only 1 epoch. In all fine-tuning experiments, we ensure the training set (both the examples and the context database) are disjoint from the validation set to ensure fair evaluation. 

\subsection{Comparison Models}
We compare inference accuracy (measured by log-perplexity) and processing latency of PICASO with its order-dependent version, CASO, in addition to the following methods:

\noindent\textbf{Baseline}: Loss of the model on the test sample without using any contextual information.

\noindent\textbf{Concatenation} \citep{ram2023context}: We concatenate individual context segments based on a specific ordering. For WikiText-V2 experiments, we consider the ``best-case ordering" as determined by the sentence embedding model where more relevant contexts are closer to the query (at the end). We initialize the model with the state of the earliest context segment in the concatenation, which we assume to be available via pre-processing, and recompute the composed state from only the remaining ones. 

\noindent\textbf{Soup} \citep{pioro2024state}: Simple averaging of states obtained from each context.

\subsection{Main Results}
In this section, we evaluate both the zero-shot and fine-tuned performance of PICASO
in \Cref{sec:exp-results-zero-shot} and \Cref{sec:exp-results-bptc} respectively, and show in \Cref{sec:exp-results-nooverfit} that the fine-tuned model does not overfit to the composition task. We also include additional experiments showing that LLM capabilities are not impacted by fine-tuning in \Cref{app:btpc-standard-llm-tasks}, and show that PICASO can also be used for data attribution in \Cref{app:attribution} 

\subsubsection{Zero-shot performance}
\label{sec:exp-results-zero-shot}
We demonstrate in \Cref{fig:zero-shot-wikitext} that applying PICASO-R in a zero-shot manner on WikiText-V2 greatly improves performance over the baseline by an average of $10.1\%$ across 1-10 retrieved context segments. This greatly improves over Soup ($8.5\%$) and CASO ($9.2\%$). Compared to concatenation ($11.1\%$), PICASO-R performs slightly worse but benefits from magnitudes improvement in processing time on an average of $5.4\times$. In this task, PICASO-R achieves almost exactly the same performance as PICASO-S, but with a much faster composition time. As a sanity check for motivation for our method, we show that PIConcat achieves the best performance ($12.0\%$) overall, but at the cost of significantly greater computational time despite our batched-inference implementation. 

In Row 1 of \Cref{tab:msmarco}, we show that applying PICASO-R and PICASO-S in a zero-shot manner on MSMARCO similarly yields considerable improvements ($37.2\%$) over the naive baseline, while achieving performance close to that of concatenation ($41.3\%$). 

\begin{figure}[th]
    \centering
\includegraphics[width=0.325\textwidth]{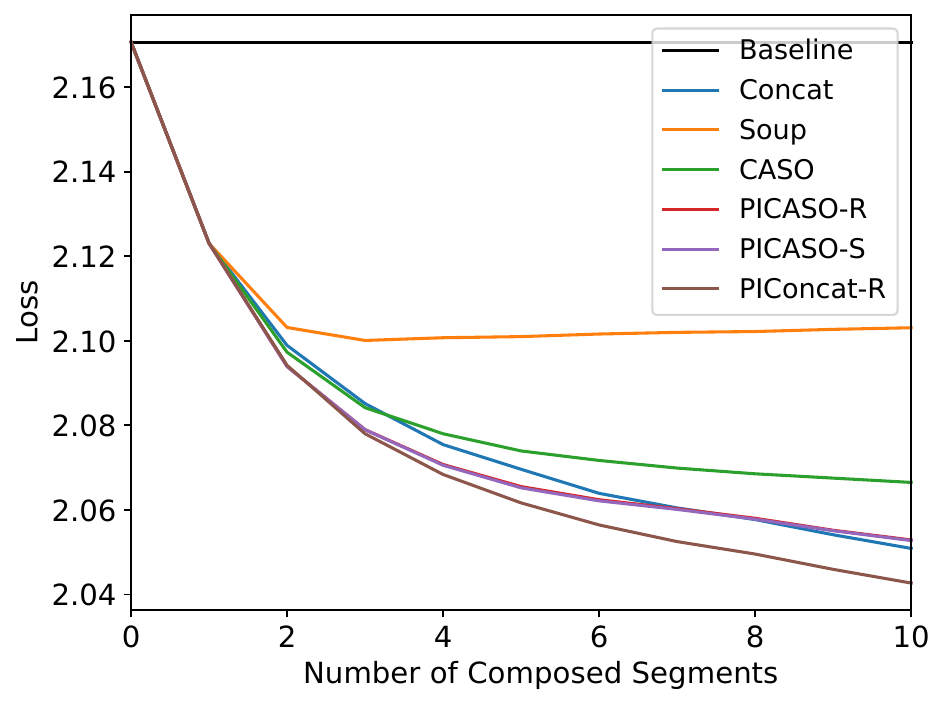}
    \includegraphics[width=0.325\textwidth]{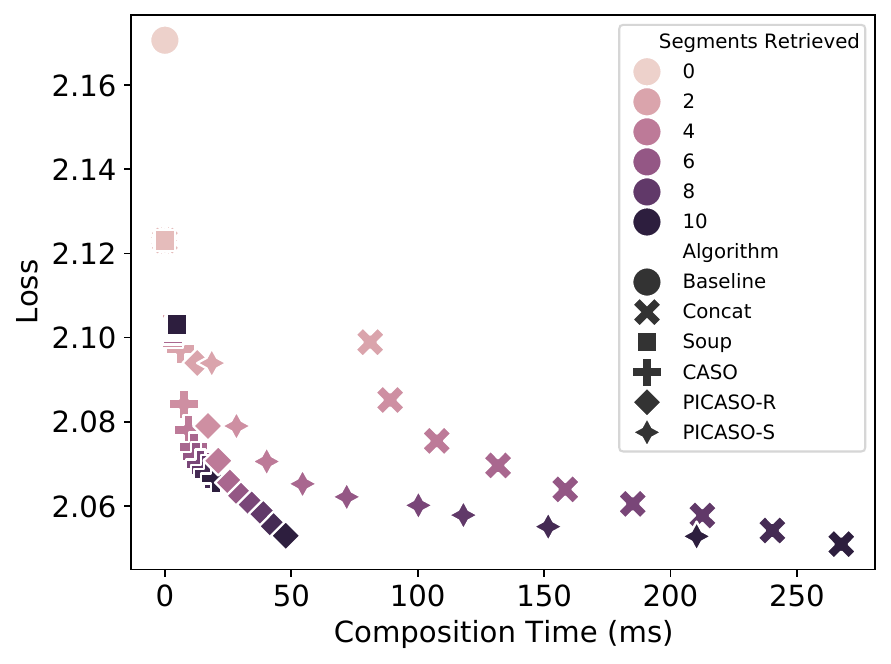}
    \includegraphics[width=0.325\textwidth]{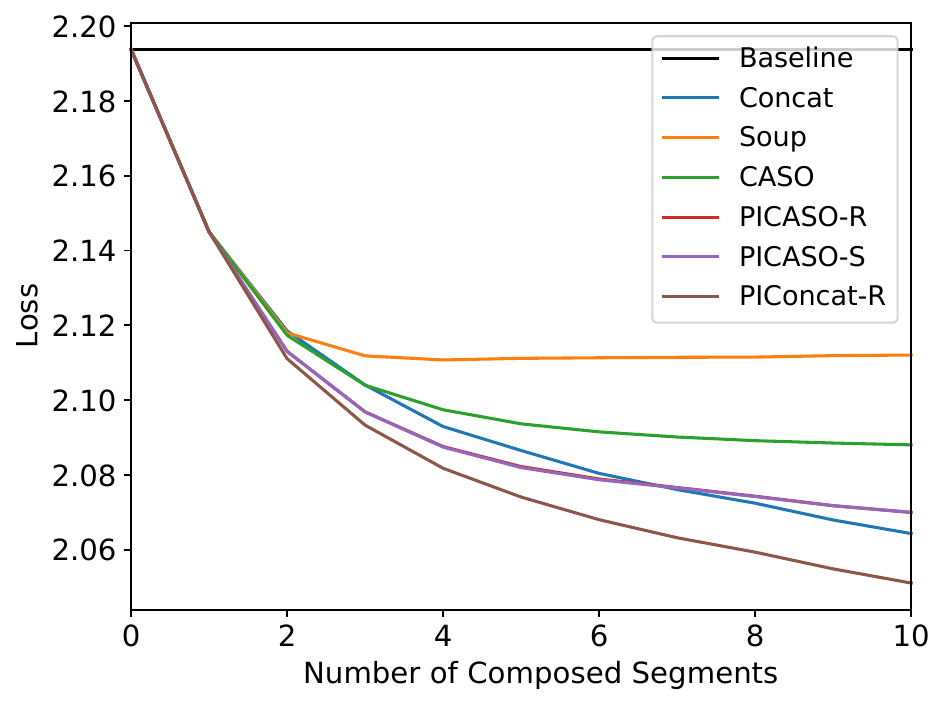}
    \caption{\textbf{(Left + Middle:)} Fine-tuning with BPTC on WikiText brings the performance of PICASO to that of concatenation, while retaining its significant speed advantages. \textbf{(Right:)} Fine-tuning with BP2C on WikiText improves the effectiveness of PICASO as well, but is much faster in terms of training time since it does not require backpropagating through the composed state. Note that fine-tuning has no impact on the actual composition time when used for inference.}
    \label{fig:finetuned-bptc-wikitext}
\end{figure}
\subsubsection{Backpropagation Through and To Composition}
\label{sec:exp-results-bptc}

While PICASO demonstrates strong performance in the zero-shot setting, PICASO still lags behind concatenation in terms of prediction accuracy. We attribute this to composed states being ``out-of-distribution" for the model, since these states do not arise from any sequence of input tokens. In this section, we test if this can be resolved via fine-tuning with PICASO-R composed states via BPTC and BP2C. Indeed, as we show in \Cref{fig:finetuned-bptc-wikitext}, BPTC and BP2C greatly improves the performance of PICASO-R and PICASO-S to that similar to concatenation, while maintaining much faster processing timings on WikiText.  Similarly, we show in Rows 4-5 of \Cref{tab:msmarco} that fine-tuning on the MSMARCO training set also levels the performance of PICASO with that of concatenation. We also note that while BP2C is significantly faster in terms of training time, it incurs a small performance trade-off compared to BPTC for both datasets, keeping number of training iterations constant.

\begin{table}[tb]
    \footnotesize
    \centering
    \setlength{\tabcolsep}{2.5pt}
    \caption{All models in this table are evaluated on the MSMARCO validation set. We evaluate performance of models fine-tuned via BPTC/BP2C on both the WikiText and MSMARCO training sets. Rows 2-3 show that fine-tuning models to compose WikiText context segments does not harm performance when evaluated on composing context segments from MSMARCO. When composing segments from distributions similar to those encountered during training (Rows 4-5), PICASO matches the performance of concatenation while being magnitudes faster. }
    \begin{tabular}{lcccccc}\toprule
& Naive & Concat & Soup & CASO & PICASO-R & PICASO-S \\
\midrule
Mamba2-2.7B Base & 2.42 & 1.42 & 2.04 & 1.56 & 1.52 & 1.52 \\
Mamba2-2.7B BP2C-WikiText & 2.44 & 1.44 & 2.07 & 1.53 & 1.50 & 1.50 \\
Mamba2-2.7B BPTC-WikiText & 2.43  & 1.46 & 2.08 & 1.53 & 1.49 & 1.49 \\
Mamba2-2.7B BP2C-MSMARCO & 1.85  & 0.68  & 1.27 & 0.72 & 0.69 & 0.69 \\
Mamba2-2.7B BPTC-MSMARCO & 1.79  & 0.65 & 1.20 & 0.68 & 0.65 & 0.65 \\ 
\bottomrule
    \end{tabular}
    \label{tab:msmarco}
\vspace{-1em}
\end{table}

\subsubsection{Evaluation of fine-tuned model on other different tasks}
\label{sec:exp-results-nooverfit}
We showed that models fine-tuned on a specific downstream task (training set) using BPTC/BP2C perform strongly when composing samples drawn from a similar distribution (test set). We further show in \Cref{tab:msmarco} that models fine-tuned on one domain (WikiText) can demonstrate small performance gains (or at the very least, no performance loss) when composing samples via PICASO on another domain (MSMARCO). 
Finally, we show in \Cref{app:btpc-standard-llm-tasks} that fine-tuning models with BP2C/BPTC maintain (and occasionally even improve) performance on general LLM evaluation tasks compared against the original model.

\section{Limitations and Discussion}

We have proposed a method, PICASO, that enables efficient retrieval and composition of contexts by pre-processing their individual states. Without any training, our approach can handle the composition of information contained in up to 10 context segments in a manner that is order-invariant. PICASO notably requires zero online model processing time, since generation can begin directly from the composed states. When models are further fine-tuned with our proposed learning objective, states composed using PICASO perform comparably to those produced from the concatenation of context tokens, while offering on average a $5.4\times$ faster composition time.

Nevertheless, our method does have some limitations. When applied in a zero-shot manner, PICASO still lags slightly behind concatenation in terms of prediction accuracy. 
PICASO is also currently limited to architectures based on SSM layers. We leave as future work extension of PICASO towards recently popularized attention-based hybrid models, which require more sophisticated methods of composing key-value caches. Lastly, we also leave as future work the exploration of parameter-efficient fine-tuning methods such as adapters, which can be used to augment the model at inference time to enable state composition while preserving the original model's behavior.

\bibliography{refs}
\bibliographystyle{iclr2025_conference}

\clearpage
\appendix

\section{Algorithms: PICASO-S and PICASO-R}

We show in \Cref{alg:picaso-s} how PICASO-S is computed in polynomial time via a dynamic programming approach based on \Cref{alg:picaso-s-helper}. In \Cref{alg:picaso-r}, we also show how PICASO-R can be computed with linear time complexity. Time complexity is measured as the number of arithmetic operations required as a function of number of context states.

\begin{algorithm}[H]
\caption{PICASO-S- $\mathcal{O}(n^3)$}
\label{alg:picaso-s}
\begin{algorithmic}
\Require States $x = \{x_i\}_{i=0}^{n-1}$, Decays $A = \{A_i\}_{i=0}^{n-1}$
\State \Return $\sum_{i=0}^{n-1}$ \texttt{PICASO-S-DP}$(A_{-i}) \cdot x_i$
\State \Comment $A_{-i}$ denotes all elements of $A$ except $A_i$
\end{algorithmic}
\end{algorithm}
\begin{algorithm}[H]
\caption{PICASO-S-DP - $\mathcal{O}(n^2)$}
\label{alg:picaso-s-helper}
\begin{algorithmic}
\Require Decays $A = \{A_i\}_{i=0}^{n-1}$
\State DP[:,:] $\gets$ zeros($n, n$)
\State DP[0,:] $\gets 1$
\State $w \gets 0$
\For{$i = 1 , \ldots , n-1$}
    \For{$j = i, \ldots, n-1$}
            \State DP$[i][j] \gets$ DP$[i][j-1] + A_{j-1}\cdot$ DP$[i-1][j-1]$
    \EndFor
\EndFor
\For{$i=0,\ldots,n-1$}
\State $w \gets w + \frac{1}{ n \cdot {{n-1} \choose i} } \cdot$ DP$[i][n-1]$ 
\EndFor
\State\Return $w$
\end{algorithmic}
\end{algorithm}
\begin{algorithm}[H]
\caption{PICASO-R - $\mathcal{O}(n)$}
\label{alg:picaso-r}
\begin{algorithmic}
\Require States $x = \{x_i\}_{i=0}^{n-1}$, Decays $A = \{A_i\}_{i=0}^{n-1}$
\State $\hat{x} \gets 0, \quad \hat{A} \gets [A_1, \ldots, A_n, A_1, \ldots, A_n]$
\State $\hat{A}_{\Pi} = \operatorname{cumprod}(\hat{A})$ %
\State $\hat{A}_{\Sigma\Pi} = \operatorname{cumsum}(\hat{A_\Pi})$
\For{$i = 1 , \ldots , n-1$}
\State $w_i \gets \frac{1}{n} \cdot \left( \left( \hat{A}_{\Sigma\Pi}[n + i - 1] - \hat{A}_{\Sigma\Pi}[i] \right) {\hat{A}_{\Pi}[i]}^{-1} + 1 \right)$
\State $\hat{x} \gets \hat{x} + w_i \cdot x_i$
\EndFor
\State \Return $\hat{x}$
\end{algorithmic}
\end{algorithm}

\section{Further Analysis}

\subsection{Computational Costs of PIConcat}
In \Cref{fig:wikitext-piconcat-timings}, we visualize the computational costs incurred by PIConcat, which we show to dominate that of other methods despite resulting in the best performance on the WikiText dataset.

\begin{figure}[h]
    \centering
    \includegraphics[width=0.6\textwidth]{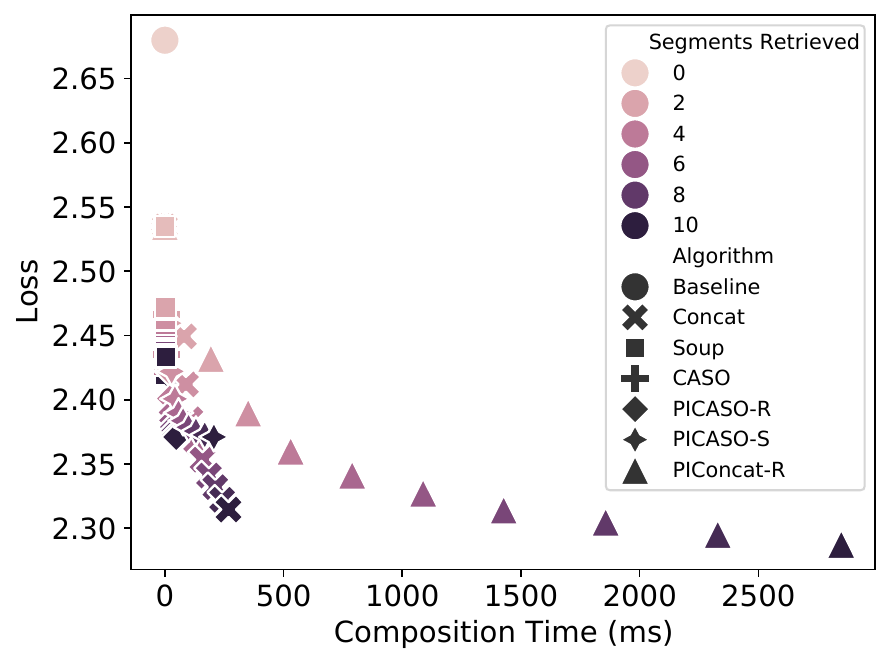}
    \caption{Timings for different composition algorithms evaluated on WikiText using Mamba-2 2.7B (zero-shot), including that of PIConcat-R. While PIConcat results in the best performance (y-axis), its computational cost (x-axis) is significantly higher than that of other methods. We refer to \Cref{fig:zero-shot-wikitext} for a more condensed plot to compare the remaining methods.}
    \label{fig:wikitext-piconcat-timings}
\end{figure}

\subsection{Scaling beyond effective context length}
\begin{figure}[h]
    \centering    
\includegraphics[width=0.6\textwidth]{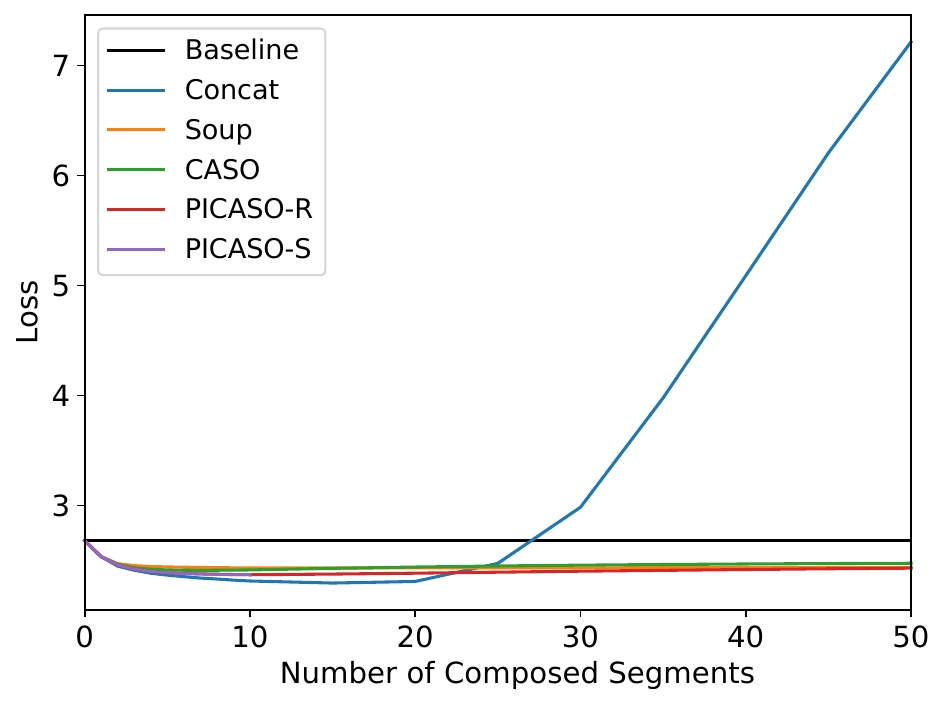}
    \caption{Concatenation scales poorly with total size of retrieved contexts beyond training context length. PICASO yields greater stability even composing up to 50 context segments retrieved from WikiText. }
    \label{fig:picaso-scaling-context}
\end{figure}

In \Cref{fig:picaso-scaling-context}, we show that as the total length of retrieved contexts scale beyond a certain threshold (effective context size of the model), the loss from concatenation blows up and rapidly increases beyond the no-retrieval baseline. On the other hand, performance of PICASO remains stronger than that of the baseline when composing 50 context segments.

\subsection{Inference vs Processing Time}
In \Cref{fig:mamba2-processing-vs-inf}, we show that the context processing time for Mamba-2 comprises a significant proportion of the total generation time. For large sequence lengths beyond 6K tokens, the processing time even dominates the inference time for generating 32 tokens.
\begin{figure}[h]
    \centering
    \includegraphics[width=0.6\textwidth]{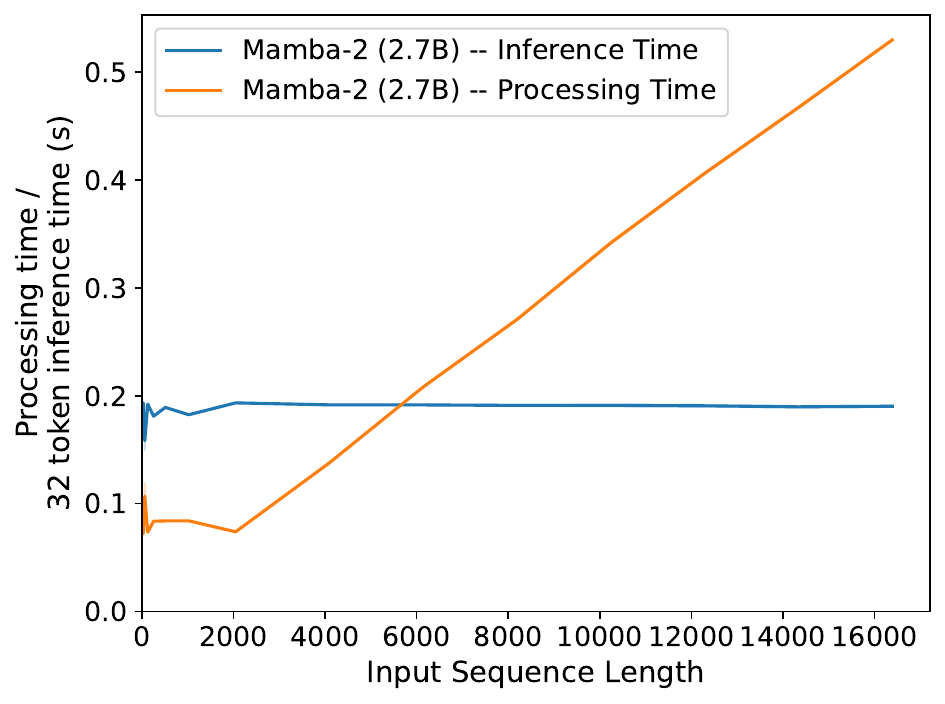}
    \caption{Mamba-2 Processing vs Inference Time of 32 tokens. Processing time (orange) occupies a significant proportion of the total time taken to generate from an input sequence, even dominating the constant inference time from the processed state (blue) as number of tokens in the input grows.}
    \label{fig:mamba2-processing-vs-inf}
\end{figure}

\subsection{Performance on LLM Evaluation Tasks}
\label{app:btpc-standard-llm-tasks}
In \Cref{tab:btpc-standard-llm-tasks}, we show that fine-tuning Mamba2-2.7B with BTPC/BP2C objectives do not degrade existing LLM capabilities when evaluated on several LLM evaluation benchmarks - HellaSwag \citep{zellers2019hellaswag}, PIQA \citep{bisk2020piqa}, ARC-E, ARC-C \citep{clark2018think}, WinoGrande \citep{sakaguchi2021winogrande}, and OpenbookQA \citep{mihaylov2018can}.

\begin{table}[h]
\setlength{\tabcolsep}{5pt}
\footnotesize
    \caption{Evaluation of Mamba2-2.7B trained with BPTC and BP2C on LLM evaluation tasks. Here, we show that fine-tuning for composition does not degrade existing LLM capabilities. In this table, we report the length-normalized accuracy for each task.}
    \centering
    \begin{tabular}{lcccccc}
    \toprule
     & HellaSwag & PIQA & ARC-E & ARC-C & WinoGrande & OpenbookQA  \\
    \midrule
    Mamba2-2.7B Base & $66.6 \pm 0.5$ & $76.3 \pm 1.0$ & $64.8 \pm 1.0$ &  $36.3 \pm 1.4$  & $63.9\pm1.4$ & $38.8\pm2.2$\\
    BP2C-WikiText & $66.7 \pm 0.5$ & $76.3 \pm 1.0$& $64.9 \pm 1.0$ & $37.5 \pm 1.4$ & $63.6\pm1.4$ & $39.8\pm2.2$ \\
    BPTC-WikiText & $66.7 \pm 0.5$ & $75.6 \pm 1.0$ & $64.9 \pm 1.0 $ & $37.2 \pm 1.4$ & $63.2 \pm 1.4$ & $40.2 \pm 2.2$\\
    \bottomrule
    \end{tabular}
    \label{tab:btpc-standard-llm-tasks}
\end{table}

\subsection{Ablation on Choice of Retriever}
In \Cref{fig:retriever-ablation}, we ablate the impact of difference retriever choices on PICASO-R. In particular, we evaluate the performance of PICASO-R on WikiText when using the following embedding models from Sentence-Transformers \citep{reimers-2019-sentence-bert}: average\_word\_embeddings\_glove.6B.300d,
all-MiniLM-L6-v2, and
all-mpnet-base-v2, arranged in increasing order of performance on 14 different sentence embedding tasks \citep{reimers-2019-sentence-bert}. As expected, \Cref{fig:retriever-ablation} shows that the performance of PICASO-R highly correlates with the strength of the retriever, where stronger retrievers yields better results on WikiText.

\begin{figure}[h]
    \centering
    \includegraphics[width=0.5\textwidth]{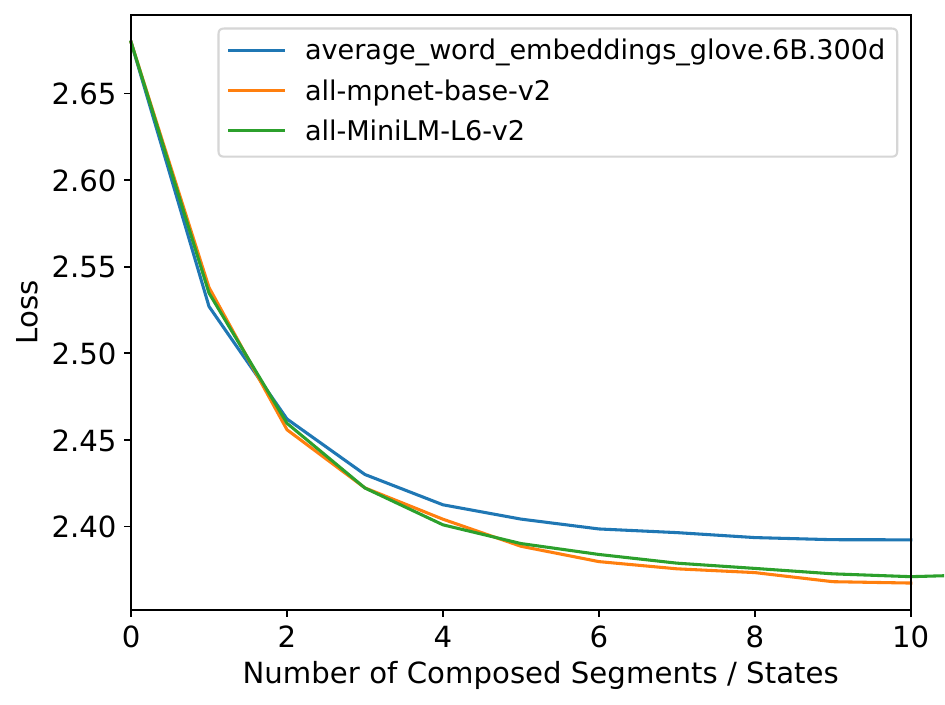}
    \caption{Ablation study on how choice of retriever model impacts performance of PICASO-R on WikiText. As expected, stronger retriever models result in better downstream performance.}
    \label{fig:retriever-ablation}
\end{figure}

\subsection{Evaluation on Multiple Choice Tasks}
In this section, we evaluate PICASO-R on the OpenbookQA \citep{mihaylov2018can} multiple-choice task, where we retrieve from a context database of full passages from WikiText-V2. While OpenbookQA provides the ground truth fact for each evaluation sample, we discard this in our evaluations following standard practice in \cite{eval-harness}. We leverage the same retrieval model used for the main WikiText experiments. \Cref{tab:openqa} shows that PICASO-R achieves performance close to concatenation, with a $8\times$ speed-up in composition time.
\begin{table}[h]
    \centering
    \begin{tabular}{cccccc}
    \toprule
         \multicolumn{2}{c}{Naive} & \multicolumn{2}{c}{Concat} & \multicolumn{2}{c}{PICASO-R}   \\
        Acc ($\uparrow$) & Time ($\downarrow$) &  Acc ($\uparrow$) & Time ($\downarrow$) & Acc ($\uparrow$) & Time ($\downarrow$)  \\
        \midrule
        38.8\% & NA & 40.0\% & 233 ms & 39.9\% & 29 ms \\
         \bottomrule
         & 
    \end{tabular}
    \caption{Evaluation on OpenbookQA dataset, augmented with retrieved passages from WikiText. We use normalized accuracy as our evaluation metric, and report the time taken to compose retrieved passages. Numbers shown in the table are averaged across retrieving between 1 to 10 full contexts from WikiText (as opposed to half context segments in our main paper experiments). }
    \label{tab:openqa}
\end{table}

\subsection{Context Statistics}

In \Cref{fig:wikitext-stats}, we plot the distribution over the lengths (tokens and characters) of retrieved context segments used in the main paper WikiText retrieval dataset.

\begin{figure}[h]
    \centering
    \includegraphics[width=0.48\textwidth]{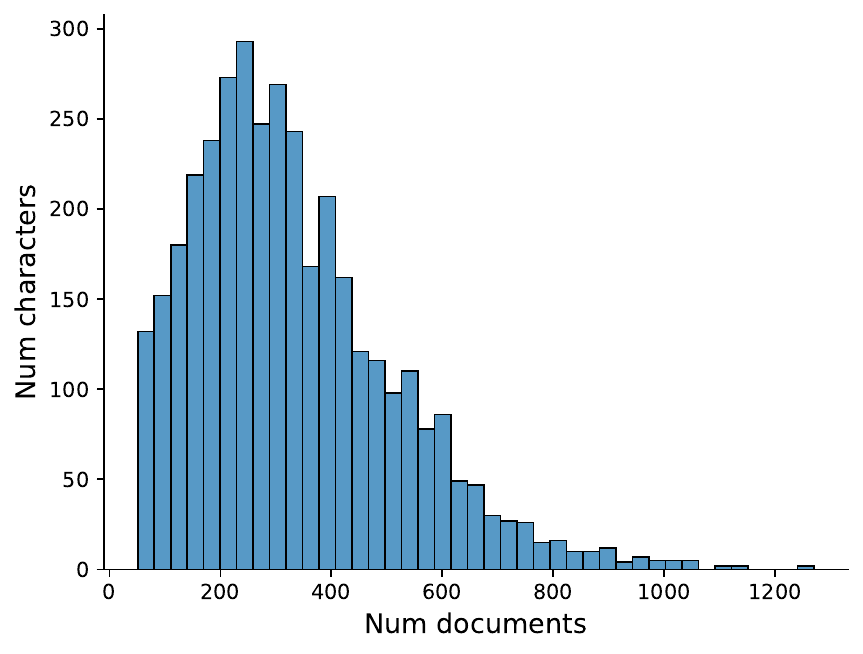}
    \includegraphics[width=0.48\textwidth]{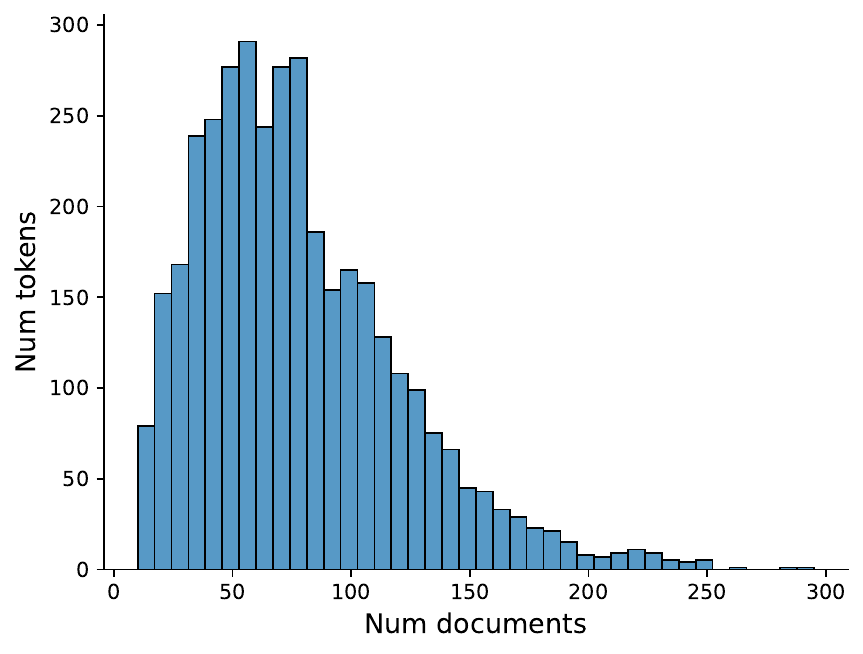}
    \caption{Histogram of the lengths, in terms of (\textbf{Left}) characters and (\textbf{Right}) tokens, of database context segments used in the main paper WikiText experiments.  } 
    \label{fig:wikitext-stats}
\end{figure}

\section{Data Attribution}
\label{app:attribution}
\begin{table}[ht]
    \centering
    \caption{Zero-shot Data Attribution on MSMARCO with Mamba2-2.7B, measured by precision. We compare Leave-One-In (LOI) and Leave-One-Out (LOO), where we implement LOO with varying methods for state composition. }
    \begin{tabular}{cccccc}
    \toprule
          LOI & Concat & Soup & CASO & PICASO-R & PICASO-S \\
         \midrule
         0.699 & 0.690 &  0.629 & 0.725 & 0.732 & 0.731 \\
         \bottomrule
    \end{tabular}
    \label{tab:attribution}
\end{table}

Consider a question-answer pair $(\bm{u}_q, \bm{u}_a)$, and a sequence of potentially relevant contexts $S = (\bm{u}_1, \ldots, \bm{u}_n)$. We would like to select the most relevant context for inferring the answer. There are at least two ways to do so with model $f_\theta$:

The first method, which we term ``Leave-one-in", is to prepend each candidate context to the question, and evaluate the loss on the answer. Equivalently, $\argmin_{i} L_{CE}(f_\theta(\bm{u}_q, x(\bm{u}_i)), \bm{u}_a)$, where we abuse notation to denote loss on the sequence (instead of token) $\bm{u}_a$.

The second method, which we term ``Leave-one-out", is to compare the marginal increase in loss of the answer when removing each candidate from the composition of all of them. Equivalently, $\argmax_{i} \{ L_{CE}(f_\theta(\bm{u}_q, \hat{x}(S_{-i})), \bm{u}_a) - L_{CE}(f_\theta(\bm{u}_q, \hat{x}(S)), \bm{u}_a) \}$, where $\hat{x}(S_{-i})$ denotes a state composed from all contexts in $S$ other than $\bm{u}_i$.

Intuitvely, the former measures ``absolute" influence of a context, while the latter measures ``relative" influence computed as the marginal improvement from adding it to the set of other contexts.

There are several different ways to implement the latter by varying the composition method used. We show in \Cref{tab:attribution} that not only does Leave-One-Out perform best on the MSMARCO dataset, but implementing Leave-One-Out with PICASO-S and PICASO-R not only accelerates processing, but also surpasses the performance of conatenation. We attribute this to the permutation-invariance property of PICASO, which unlike concatenation, does not introduce irrelevant biases arising from arbitrary context orders.

\section{Concatenation for SSMs: Connection to jump-linear systems}

Consider a collection of context segments retrieved based on relevance to a query, and sorted randomly as context to the query. While these segments share information, they are independent given the query, and their order is accidental and uninformative.  

We are interested in a model that can efficiently process inputs in this format and extract all shared information from the input. Attention-based models are a natural choice because of the permutation-invariance of attention mechanisms (ignoring positional encoding), but they would have to process the entire input (all segments) with quadratic inference cost. On the other hand, SSMs have linear cost, but they are ill-fit to process this kind of input because of the context switches, which make the Markov assumption implicit in the state representation invalid. 

We consider a broader class of models, namely switching dynamical systems (or jump-Markov, jump-diffusion, or linear hybrid, or jump-linear systems) as the class of interest. A jump-linear system is one that has a continuous state, say $x_t$ that evolves synchronously, and a discrete state that changes the value of $x_t$, for instance
\[
x_{t+1} = \begin{cases}
    A x_t + Bu_t \quad {\rm if} \ t \in {\mathcal Z} \backslash \Omega \\
    x_{t+1} \sim P \quad {\rm if} \ t \in \Omega
\end{cases}
\]
Learning and inference for this model class corresponds to identification and filtering for this class of Jump-Markov models. In addition to a random switching, the switch can be triggered by a particular `flag' (value) of the input: 
\[
x_{t+1} = \begin{cases}
    A x_t + Bu_t \quad {\rm if} \ u_t \neq u_{\rm trigger}   \\
    x_{t+1} \sim P \quad {\rm if} \ u_t = u_{\rm trigger} 
\end{cases}
\]
If the value of $u_{\rm trigger}$ is known, then a given identification and filtering scheme can be applied by switching the estimated state according to the trigger.

Since modern state space models are input-dependent, they automatically fit the latter class of models and can handle switches without modifications. However, what they cannot handle is the fact that the order of the segments is uninformative. As a result, presenting the same segments in different order would yield different states.
Accordingly, our goal is to enable SSMs to learn from segments up to permutations, so we can accommodate sequences where the ordering within segments is informative and respected, while the ordering of segments is uninformative and factored out.

\section{General Recurrence Structure}
In the main paper, we introduced a specific recursive relation satisfied by Elementary Symmetric Polynomials. Here, we introduce a more general form which can potentially be used for more efficient implementations:

\begin{proposition} For any choice of $1 \leq q \leq n-1 $
\label{eqn:esp-recursion}
\begin{align*}
e_m(A_1, \cdots, A_{n-1}) = \sum_{j=\text{max}(q+m-n+1,0)}^{\text{min}(m,q)}e_{m-j}(A_1, \cdots, A_{n-1-q})e_{j}(A_{n-q},\cdots A_{n-1})
\end{align*}
\end{proposition}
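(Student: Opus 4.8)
The plan is to recognize this as the standard block-splitting (multiplicativity) identity for elementary symmetric polynomials, specialized to the partition of the $n-1$ variables $A_1,\ldots,A_{n-1}$ into a first block $\{A_1,\ldots,A_{n-1-q}\}$ of size $n-1-q$ and a second block $\{A_{n-q},\ldots,A_{n-1}\}$ of size $q$. As in the proposition defining PICASO-S, I work under the standing assumption that the $A_i$ commute (e.g.\ are diagonal), so that they may be treated as elements of a commutative ring and manipulated exactly like scalars.

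First I would introduce the generating function in a formal scalar indeterminate $t$,
\[
\prod_{i=1}^{n-1}\bigl({\rm Id} + t\,A_i\bigr) \;=\; \sum_{m=0}^{n-1} e_m(A_1,\ldots,A_{n-1})\, t^m,
\]
which holds by the very definition of $e_m$, since commutativity lets us expand the product and collect all squarefree degree-$m$ monomials. Because the factors commute, I may regroup the product across the cut point determined by $q$,
\[
\prod_{i=1}^{n-1}\bigl({\rm Id}+t A_i\bigr)
= \Bigl(\prod_{i=1}^{n-1-q}({\rm Id}+t A_i)\Bigr)\Bigl(\prod_{i=n-q}^{n-1}({\rm Id}+t A_i)\Bigr),
\]
and each of the two factors is itself the generating series of the elementary symmetric polynomials of its block, namely $\sum_{a} e_a(A_1,\ldots,A_{n-1-q})t^a$ and $\sum_{b} e_b(A_{n-q},\ldots,A_{n-1})t^b$.

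Next I would equate the coefficient of $t^m$ on both sides. The product of the two blockwise series is a Cauchy product, so its $t^m$-coefficient is $\sum_{a+b=m} e_a(A_1,\ldots,A_{n-1-q})\,e_b(A_{n-q},\ldots,A_{n-1})$; writing $b=j$ and $a=m-j$ reproduces exactly the summand $e_{m-j}(A_1,\ldots,A_{n-1-q})\,e_j(A_{n-q},\ldots,A_{n-1})$ of the claim. The only remaining point is the range of $j$: a factor $e_a$ vanishes unless $0\le a$ is at most the number of its arguments, so $e_{m-j}(A_1,\ldots,A_{n-1-q})\neq 0$ forces $0\le m-j\le n-1-q$, i.e.\ $j\le m$ and $j\ge q+m-n+1$, while $e_j(A_{n-q},\ldots,A_{n-1})\neq 0$ forces $0\le j\le q$. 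Intersecting these constraints yields precisely $\max(q+m-n+1,0)\le j\le\min(m,q)$, matching the stated bounds.

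The computation is essentially immediate once phrased through the generating function, and the only place requiring care — the step I would treat as the main (minor) obstacle — is the bookkeeping of the summation limits, ensuring that the $\max$/$\min$ bounds arise precisely from the degree ranges in which the two blockwise polynomials are nonzero. An equivalent purely combinatorial argument would instead observe that every increasing index tuple $i_1<\cdots<i_m$ appearing in a monomial of $e_m(A_1,\ldots,A_{n-1})$ splits uniquely into those indices falling in the first block and those in the second; commutativity lets the corresponding matrix product factor accordingly, and summing over all such splits gives the same convolution, with the feasibility of choosing $m-j$ indices from $n-1-q$ and $j$ from $q$ producing the same bounds.
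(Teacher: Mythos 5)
Your proof is correct, and your primary route --- the generating-function identity $\prod_{i=1}^{n-1}(\mathrm{Id}+tA_i)=\sum_{m} e_m(A_1,\ldots,A_{n-1})\,t^m$, factored across the cut at $q$ and compared coefficientwise as a Cauchy product --- is genuinely different from the paper's. The paper argues directly (and rather informally) that each degree-$m$ squarefree monomial of $e_m(A_1,\ldots,A_{n-1})$ splits uniquely into its indices in the first block and its indices in the second, summing over the split size $j$; this is precisely the ``equivalent purely combinatorial argument'' you sketch in your closing paragraph. What the generating-function route buys you is airtight bookkeeping of the summation limits, which the paper glosses over entirely: your $\max/\min$ bounds fall out mechanically from the degree ranges in which each blockwise series is nonzero, whereas the paper simply says one sums ``over all possible values for $j$.'' One refinement worth noting: your standing commutativity assumption is stronger than needed for this statement. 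Because the split is contiguous ($\{1,\ldots,n-1-q\}$ versus $\{n-q,\ldots,n-1\}$) and the $e_m$ are defined with factors in increasing index order, every increasing index tuple factors across the cut with all block-1 factors to the left of all block-2 factors; hence the regrouping of the ordered product is mere associativity, and the coefficient comparison goes through without ever commuting the $A_i$ (the indeterminate $t$ is a scalar and commutes with everything). Commutativity is of course harmless in the paper's setting of diagonal $A$ matrices, and it would be required for a non-contiguous partition of the variables, but for the proposition as stated your argument works verbatim in the noncommutative case once products are kept in index order.
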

\begin{proof}
    We compute $e_m(A_1, \cdots, A_{n-1})$ using a Dynamic Programming (DP) approach, where we break the problem into smaller problems, and merge the solutions. First we split the $n-1$ variables at some random index $q$ to create two partitions, ($A_1 \cdots, A_{n-1-q})$ and $(A_{n-q}, \cdots A_{n-1})$, and then compute $e_{m-j}$ and $e_{j}$ on each partition respectively. For a given value of $j$, $e_{m-j}(A_1, \cdots, A_{n-1-q})e_{j}(A_{n-q},\cdots A_{n-1})$ will only compute a subset of values from $e_m(A_1, \cdots, A_{n-1})$, and hence we sum over all possible values for $j$.
\end{proof}
In particular, taking $q=1$, we obtain the following:
\begin{align*}
     e_m(A_1, \ldots, A_{n-1}) = A_{n-1} e_{m-1}(A_1, \ldots, A_{n-2}) + e_m (A_1, \ldots, A_{n-2})
\end{align*}
which we use for our implementation of PICASO-S.

\end{document}